\newcounter{algoline}
\newcommand{\cmark}{\ding{51}}
\newcommand{\xmark}{\ding{55}}
\tikzset{
	treenode/.style = {align=center, inner sep=0pt, text centered, font=\sffamily},
	arn_n/.style = {treenode, circle, black, font=\sffamily\bfseries, draw=black, fill=white, text width=1.5em},
	arn_r/.style = {treenode, circle, red, draw=red, text width=1.5em, very thick},
	arn_x/.style = {treenode, rectangle, draw=black, minimum width=0.5em, minimum height=0.5em}
}
\newtheorem{definition}{Definition}
\pgfplotsset{compat=newest,
	colormap = {whiteblack}{color(0cm)  = (white);color(1cm) = (black)}
}
\newtheorem{example}{Example}
\newtheorem{theorem}{Theorem}
\newcommand{\vars}{\mathscr X}
\newcommand{\cons}{\mathscr C}
\newcommand{\agents}{\mathscr A}
\newcommand{\dom}{\mathscr D}
\newcommand{\Util}{\mathscr U}
\newcommand{\Rew}{\mathscr R}
\newcommand{\as}[1]{\ensuremath{(#1)}}
\newcommand{\vas}[1]{\ensuremath{[#1]}}
\newcommand{\probabilityD}{\ensuremath{probD}}
\newcommand{\agentview}{{\tt agentView}}
\newcommand{\okm}{{\tt ok?}}
\newcommand{\addlinkm}{{\tt addlink}}
\newcommand{\nogoodm}{{\tt nogood}}
\newcommand{\AgentOne}{{\ensuremath{A_1}}}
\newcommand{\AgentTwo}{{\ensuremath{A_2}}}
\newcommand{\AgentThree}{{\ensuremath{A_3}}}
\newcommand{\Students}{Students~{\ensuremath{A_1}},
	{\ensuremath{A_2}}, and {\ensuremath{A_3}}}
\newcommand{\estcost}{$estimatedCost$\xspace}
\newcommand{\risk}{agreementProb}
\newcommand{\solutionCost}{$SolutionCost$}
\newcommand{\privacyCost}{$PrivacyCost$}
\newcommand{\SyncBT}{{\textsl{SyncBT}}\xspace}   
\newcommand{\ABT}{{\textsl{ABT}}\xspace} 
\newcommand{\DBO}{{\textsl{DBO}}\xspace} 
\newcommand{\DSA}{{\textsl{DSA}}\xspace} 
\newcommand{\ADOPT}{{\textsl{ADOPT}}\xspace} 
\newcommand{\SyncBTU}{{\textsl{SyncBTU}}\xspace} 
\newcommand{\ABTU}{{\textsl{ABTU}}\xspace} 
\newcommand{\DBOU}{{\textsl{DBOU}}\xspace} 
\newcommand{\DSAU}{{\textsl{DSAU}}\xspace} 
\newcommand{\ADOPTU}{{\textsl{ADOPTU}}\xspace} 	
\newcommand{\DisPrivCSP}{{\textsl{DisPrivCSP}}\xspace }
\newcommand{\VPS}{{\textsl{VPS}}\xspace }
\newcommand{\PKC}{{\textsl{PKC}}\xspace }
\newcommand{\DPOP}{{\textsl{DPOP}}\xspace }
\newcommand{\SSDPOP}{{\textsl{SS-DPOP}}\xspace }
\newcommand{\SyncBB}{{\textsl{SyncBB}}\xspace }
\newcommand{\PSyncBB}{{\textsl{P-SyncBB}}\xspace }
\newcommand{\NCBB}{{\textsl{NCBB}}\xspace }
\newcommand{\AFB}{{\textsl{AFB}}\xspace }
\newcommand{\DisCSP}{{\textit{DisCSP}}\xspace }
\newcommand{\DisCSPs}{{\textit{DisCSPs}}\xspace }
\newcommand{\DCOP}{{\textit{DCOP}}\xspace }
\newcommand{\DCOPs}{{\textit{DCOPs}}\xspace }
\newcommand{\MOOP}{{\textit{MOOP}}\xspace }
\newcommand{\MODCOP}{{\textit{MO-DCOP}}\xspace }
\newcommand{\MODCOPs}{{\textit{MO-DCOPs}}\xspace }
\newcommand{\UDisCSP}{{\textit{UDisCSP}}\xspace }
\newcommand{\UDisCSPs}{{\textit{UDisCSPs}}\xspace }
\newcommand{\UDCOP}{{\textit{UDCOP}}\xspace }
\newcommand{\UDCOPs}{{\textit{UDCOPs}}\xspace }
\newcommand{\DisCP}{{\textit{DisCP}}\xspace }
\newcommand{\DisCPs}{{\textit{DisCPs}}\xspace }
\newcommand{\UDisCP}{{\textit{UDisCP}}\xspace }
\newcommand{\UDisCPs}{{\textit{UDisCPs}}\xspace }
\newcommand{\POMDP}{{\textit{POMDP}}\xspace }
\newcommand{\POMDPs}{{\textit{POMDPs}}\xspace }
\newcommand{\estimateCostDisCSP}{{\tt estimateCostDisCSP}\xspace }
\newcommand{\backtrack}{{\tt backtrack}\xspace }
\newcommand{\interruptSolving}{{\tt interruptSolving}\xspace }
\newcommand{\checkagentviewABTU}{{\tt checkAgentView\_ABTU}\xspace }
\newcommand{\assignCPA}{{\tt assignCPA}\xspace }
\newcommand{\estimateCostDCOP}{{\tt estimateCostDCOP}\xspace }
\newcommand{\checkAgentViewADOPTU}{{\tt checkAgentView\_ADOPTU}\xspace }
\newcommand{\sendImproveDBOU}{{\tt sendImproveDBOU}\xspace }
\newcommand{\procDSAU}{{\tt DSAU}\xspace }
\newcommand{\costTerminal}{\ensuremath{costRound}}
\newcommand{\CostTemp}{\ensuremath{costTemp}}
\def\Aa{Professor}
\def\Aaa{{Professor \ensuremath{A_1}}}
\def\Aab{{Student \ensuremath{A_2}}}
\def\Aac{{Student \ensuremath{A_3}}}
\def\Ab{Student A2}
\def\Ac{Student A3}
\def\M{M}
\newcommand{\eg}{{\textit{e.g.},}\xspace}
\newcommand{\ie}{{\textit{i.e.},}\xspace}
\begin{document}
	
\title{Distributed Constraint Problems for Utilitarian Agents with Privacy Concerns, Recast as POMDPs}

\author{Julien Savaux, Julien Vion, Sylvain Piechowiak, Ren\'{e} Mandiau,~\IEEEmembership{LAMIH UMR CNRS 8201, University of Valenciennes, France}\\
	Toshihiro Matsui,~\IEEEmembership{Nagoya Institute of Technlogy, Japan}\\
	Katsutoshi Hirayama,~\IEEEmembership{Kobe University, Japan}\\
	Makoto Yokoo,~\IEEEmembership{Kyushu University, Japan}\\	
	Shakre Elmane, Marius Silaghi,~\IEEEmembership{Florida Institute of Technology, USA}\\
	}
		
\maketitle
	
	\begin{abstract}
		
		Privacy has  traditionally been a major motivation for distributed problem solving. 
		Distributed Constraint Satisfaction Problem (\DisCSP) as well as Distributed Constraint 
		Optimization Problem (\DCOP) are fundamental models used to solve various families 
		of distributed problems. Even though several approaches have been proposed to quantify 
		and preserve privacy in such problems, none of them is exempt from limitations.
		Here we approach the problem by assuming that computation is performed among 
		utilitarian agents. We introduce a utilitarian approach 
		where the utility of each state is estimated as the difference between the reward for 
		reaching an agreement on assignments of shared variables and the cost of privacy loss. 
		We investigate extensions to solvers where agents integrate the utility function to 
		guide their search and decide which action to perform, defining thereby their policy. 
		We show that these extended solvers succeed in significantly 
		reducing privacy loss without significant degradation of the solution quality.		
	\end{abstract}
	
\IEEEpeerreviewmaketitle	
	\LinesNumbered

	\section{Introduction}\label{Introduction}
	
	Privacy is an important problem in a lot of distributed applications,
	therefore, the reward for solving the problem should be considered, but also
	the cost of privacy loss during the process~\cite{greenstadt2006}. 
	For example, when users exchange information on
	social networks~\cite{krupa2012},
	they reveal 
	(often unconsciously)  personal data (\eg  age,  address, date of birth). 
	Related studies were also performed in the domain of Ambient Intelligence which is concerned with the distributed management of confidential information  between different components (\eg camera, computer, PDA), to allow or prevent the sharing of resource information~\cite{piette2016}.
	
	In distributed scheduling problems, 
	confidentiality also happen when information is exchanged between agents/participants (for example, a participant
	does not want to reveal his/her unavailability for a time slot because the explanation concerns his/her
	private life, and it should not be necessarily discussed publicly).
	Indeed, we know that the assignment of time slots can be difficult if participants do not
	want to reveal their constraints~\cite{freuder2001,crepin2009}.  Such coordinated decisions are in conflict with the need to keep constraints
	private~\cite{faltings2008}.
	We 	consider that these coordinated decisions may be defined 
	by a set of different constraints distributed among different agents.
	
	In Distributed Constraint Satisfaction Problems (\DisCSPs) and Distributed Constraint Optimization Problems (\DCOPs), agents have to assign values to variables while 
	respecting given constraints.
	To find such assignments, agents exchange messages until a solution is found or until an agent detects that there is none.
	Thus, agents have to reveal information during search, 
	causing privacy to be a major concern in
 \DisCSPs~\cite{yokoo1998,HamadiBessiereQuinqueton98:HBC98}. 
		If an agent is
	concerned about its privacy, then it can associate a cost to the
	revelation of each information in its local problem. This cost may be embedded into utility driven reasoning.
	
	A common assumption is that utility-based agents associate each state with 
	a utility value~\cite{russell1995}.
	As such, each communication action's utility is evaluated 
	as the difference between initial and 
	final utilities. Indeed, if an agent is concerned about its privacy, then it can associate a cost to the revelation of each information in its local 
	problem. Since they are interested in solving the problem, they must also be able to quantify the reward they draw from finding the solution. Here we approach the 
	problem by assuming that privacy is a utility that can be aggregated with the reward for solving this problem. We evaluate how much privacy is lost by 
	the agents during the problem solving process, by the sum of the utility lost for each information that was revealed. For example, the existence and availability of a value from the domain of a variable 	is the kind of information that the agents want to keep private. The cost of a constraint for a solution is another	example of information that agents would like to keep private.
	
	While sometimes possibilistic reasoning is  used to guide search,
	agents were usually assumed to participate in the search process until an agreement  is found between all different agents (\ie a solution). 
	We investigate the case where an agent may modify its search process to optimize utility. Two extensions are introduced, Utilitarian Distributed Constraint Satisfaction Problem (\UDisCSP) and Utilitarian Distributed Constraint Optimization Problem (\UDCOP), addressing \DisCSPs and \DCOPs, respectively.
	These extensions exploit the rewards of agreements and costs representing privacy loss as guidance for the 	utility-based agents, where the utility of each 
	state is estimated as the difference between the expected rewards for 
	agreements, and the expected cost representing privacy loss.
	In this work, Distributed Constrained Problem (\DisCP) will refer to both \DisCSP and \DCOP. Similarly, Utilitarian Distributed Constrained 
	Problems (\UDisCPs) will refer to both \UDisCSPs and \UDCOPs.

	The paper is organized as follows.
	Section~\ref{Background} presents existing research concerning 
	solving algorithms and approaches to privacy for 
	\DisCPs.
	Further, Section~\ref{Contribution} describes the concepts and 
	solvers involved in \UDisCP
	to preserve privacy. Section~\ref{TheoreticalDiscussion} discusses theoretical implications. 
	Section~\ref{Experiments} reports  our experimental results and  evaluates  privacy loss on distributed meeting scheduling (DMS) problems.
	Section~\ref{Conclusions} presents our conclusions and our directions for future research.

	\section{Background}\label{Background}
	
	We present distributed constrained problems (\ref{DisCP}), and existing approaches for privacy as well as their limits (\ref{defprivacy}).
	
	\subsection{Distributed Constrained Problem} \label{DisCP}
	
	Distributed Constraint Satisfaction Problems (\DisCSPs) and 
	Distributed Constraint Optimization Problems (\DCOPs)
	have been extensively studied as a fundamental way of
	modelling constrained problems in
	multi-agent systems, and will be defined in the following, as well as existing solvers.
		
	\subsubsection{Definitions}
Let us first remind the definitions of the
Distributed Constraint Satisfaction Problem (\DisCSP) and 
of the
Distributed Constraint  Optimization Problem (\DCOP)~\cite{yokoo1998}. 

	\begin{definition}
	A Distributed Constraint Optimization Problem (\DCOP) is formally defined as a tuple $\langle{\agents,\vars,\dom,\cons}\rangle$ where:
	\begin{itemize}
		\item  $\agents = \{A_1, A_2, \dots, A_m\}$ is a finite set of $m$ agents. 
		\item $\vars = \{x_1,\linebreak[1] x_2, \dots,\linebreak[1]
		x_n\}$ is a finite set of $n$ variables. Each agent $A_{self}$ encapsulates variables denoted $\vars(A_{self})$
		(with $\vars(A_{self}) \subseteq \vars$).   
		\item $\dom=\langle \dom_1,\dom_2, \ldots, \dom_n\rangle$ is a set of $n$ domains. $\dom_{self}$ is the set of possible values for
        $\vars(A_{self})$. 
		\item  $\cons = \{C_1, C_2, \dots, C_e\}$ is a finite set of $e$ 
		valued 	constraints. Each
		constraint $C_i$ involves   some variables $\vars(C_i) \subseteq
		\vars$ 	
		defining a cost (positive value) for assignments.  We note that $\cons_{self}= \{ C_i \in \cons | \vars(C_i) \cap \vars(A_{self}) \ne \emptyset \}$.
	\end{itemize}
	The objective is to find an assignment for each variable  that	minimizes the total cost. 
	\end{definition}

\begin{definition}
	A Distributed Constraint Satisfaction Problem (\DisCSP) is defined as a \DCOP where the constraints are 
predicates, each one defining for sets of assignments a cost of 
$\infty$ (constraint violation).
	The  problem is to find an assignment to variables  that does not violate any constraint. 
\end{definition}

\begin{definition}
	Distributed Constrained  Problem (\DisCP) is any problem modelled with \DisCSP or one of its extensions. 
\end{definition}
	
	In the following, we study a particular case of problems, namely  mono-variable problems where $m=n$ (\ie a variable per agent),
	 $\vars(A_{self})$ containing a single variable, $x_{self}$.
	  Also, in the following frameworks and algorithms, $x_{self}$, $\dom_{self}$, and $\cons_{self}$
	 are generalized 
	 referring to the projection of the part of different  elements from the problem known only by the current agent (called $self$).

	The problem that each agent has to solve in \DisCP{} is a Stochastic Constraint Optimization Problem (Stochastic COP) or a Stochastic CSP~\cite{walsh2002stochastic}, which are generalizations of Stochastic SAT, SSAT~\cite{littman2001stochastic}, namely where the local problem has to be solved considering its impact on external variables and constraints whose values are not yet known and that are not under the control of this agent, but that of other agents (commonly, lower priority), values of which the agent may know a probabilistic profile.
	
	\subsubsection{Existing Solvers}

	We now introduce some existing solvers for  \DisCP 
	with which we exemplify extensions based on
	our utilitarian approach. We consider two well-known solvers for \DisCSP (\SyncBT and \ABT),
and three solvers for \DCOPs (called \ADOPT, \DSA and \DBO):
	
	\paragraph{Synchronous Backtracking (\SyncBT)}
	is the baseline algorithm for \DisCSPs~\cite{yokoo1992,zivan2003}.
	\SyncBT is a simple distribution of the standard backtracking algorithm.  
	Agents consecutively send a satisfying assignment to their variable to the 
	next agent. If an agent is unable to find an 
	instantiation compatible with the current partial assignment it has received, 
	it asks the previous one to change its assignment.
	The process repeats	until a complete solution is built or until the 
	whole search space is explored.
	
	\paragraph{Asynchronous Backtracking (\ABT)}
	is a common alternative solver for \DisCSPs that 
	allows agents to run concurrently 
	and asynchronously~\cite{yokoo1992}. 
	Each agent finds an assignment to 
	its variable and communicates it to the other connected agents
	having constraints involving this variable.
	Then agents wait for incoming
	messages. The assignments received through \okm{} form a context 
	called \agentview{}. If an agent's assignment is inconsistent with its 
	\agentview{}, it is changed and communicated to the other agents.
	A subset of an \agentview{} preventing an agent from finding an assignment that
	does not violate any of its constraints is called a nogood. 
	If an agent infers a nogood from its constraints and its \agentview{}, it
	asks the lowest priority agent involved in the nogood to change 
	its assignment through a \nogoodm{} message.
		
	\paragraph{Asynchronous Distributed Optimization (\ADOPT)}
	guarantees to find the optimal solution and only needs polynomial space ~\cite{modi2005,yeoh2010,Silaghi2009}.	
	\ADOPT organizes agents into a depth first search tree in which 
	constraints are only 
	allowed between a variable and any of its acquaintances 
	(parents and descendants in the tree).
	When the estimated cost of agents' assignment is higher than a given threshold, 
	the agent switches its value assignment to the value 
	with smallest estimated cost.
	When the upper and lower bounds meet at the root agent, a globally optimal solution has been found and the process finishes. Note that other approaches based on a graph re-arrangement have been explored, for example a cluster exploitation like Asynchronous Partial Overlay~\cite{mailler2006,Mailler2012}.
	
	\paragraph{Distributed Stochastic Algorithm (\DSA)}
	makes agents start their process by randomly selecting a value~\cite{zhang2002}.  
	Agents then enter an iteration where they send their last assigned value to
	their neighbours and collect any new values from them.
	The next candidate value is chosen based on the values
	received from other agents and on maximizing a given utility function. 
	This algorithm is incomplete and does not guarantee optimality, but it is frequently efficient
	for finding solutions close to optimum.
	
	\paragraph{Distributed Breakout (\DBO)}
	is an iterative improvement solver for \DCOPs~\cite{yokoo1996,Hirayama2005}. 
	The evaluation of a given
	solution is the summation of the weights for all its violated constraints. 
	An assignment is then changed to decrease the solution value.	
	If the evaluation of the solution cannot be decreased by changing a value,
	the current state may be a local minimum. In this situation, 
	\DBO increases the weights of constraint violation 
	pairs in the current state so
	that the evaluation of this current state becomes higher than the
	neighbouring states. Thus, the algorithm can escape from the local
	minimum.

	\subsection{Privacy} \label{defprivacy}
	
	We present generalities about privacy, in particular about 
	our definition and typology. This typology will be used to compare 
	existing approaches to privacy in \DisCP.
	
	\subsubsection{Definition and typology}
	
	Privacy is the concern of agents to not reveal their personal information. In this work, we define privacy as follows:
	\begin{definition}
		Privacy is the utility that agents benefit from conserving the secrecy of their personal information.
	\end{definition} 
	
	Contrary to the standard rewards in \DisCSPs, 
	privacy costs are proper to each individual 
	agent. Therefore, the computation is now performed 
	by utility-based and self-interested agents,
	whose decisions aims at maximizing a utility function. 
	The objective is then to define a policy
   associating an expected utility maximizing action (communication act or computation) to each state, where the state includes the belief about the global state).
	In existing works, several approaches have been developed to deal with
	privacy in \DCOPs. 
	Some cryptographic approaches offer certain end-to-end security guarantees by integrating the entire solving process in one primitive for 
	\DisCSPs or for \DCOPs, the highest level of such privacy guarantees being achievable only for problems with a single 
	variable~\cite{Silaghi2004}. 
	Other cryptographic approaches are hybrids 
	interlacing cryptographic and artificial intelligence steps~\cite{Tassa2015,greenstadt2007,grinshpoun2014}. While ensuring
	privacy in each primitive~\cite{hirt2000}, cryptographic techniques are usually
	slower, and sometimes require the use of external servers or computationally
	intensive secure function evaluation techniques that may not always be
	available or justifiable for their benefits, making them impractical~\cite{greenstadt2006}, or lacking clear global 
	security guarantees.
	A couple of such approaches with which we compare in more detail are:

	\paragraph{Distributed Pseudo-tree Optimization Procedure with Secret Sharing (\SSDPOP)} 
	modifies the standard
	\DPOP algorithm~\cite{petcu2005} to protect leaves in the depth first search tree, where agents sharing constraints 
	are on the same branch~\cite{greenstadt2007}. \SSDPOP uses
	secret sharing~\cite{shamir1979} to aggregate the results of a
	single solution, without revealing the individual valuations this solution consists in. The aggregated values are then
	passed to the bottom agent, who aggregates this information with its
	own valuations and sends the aggregate up the chain. 
	\cite{LeauteF13} has also extended \DPOP to 
	preserve different types of privacy using secure multi-party computation.

	\paragraph{Privacy-Preserving Synchronous Branch and Bound (\PSyncBB)}
	is a cryptographic 	version
	of \SyncBB~\cite{hirayama1997,Grinshpoun2016} for solving \DCOPs 
	while 
	enforcing a stronger degree of 
	constraint privacy~\cite{grinshpoun2014}. \PSyncBB
 computes the costs of CPAs 
	(current partial assignments) and compares them to the 
	current upper bound, using secure multi-party protocols.
	Some protocols were proposed in~\cite{grinshpoun2014} 
	that can solve problems
	 without resorting to 
	costly transfer sub-protocols, and compare
	the cost of a CPA  shared between two agents to 
	the upper bound held by only one of them.
	Some variations on the standard \SyncBB  
	include \NCBB and \AFB~\cite{gershman2009,grinshpoun2014}. 
		
	We choose to deal with privacy 
	by embedding it into agents' decision-making. 
	Other approaches use different metrics and frameworks to quantify privacy loss. 
	According to~\cite{grinshpoun2012}, agents privacy  may concern the four following aspects:
	
	\begin{description}
		\item Domain privacy: Agents want to keep the domain of 
		their variable private. The common  benchmarks and some algorithms (like \DPOP, and common cryptographic techniques) assume 
		that all the domains are public, which leads to a complete loss 
		of domain privacy. In the original \DisCSP approach
		a form of privacy of domains is implicit (see \ABT), 
		while being formally required in its \PKC extension~\cite{brito2009}.
		\item Constraint privacy: Agents want to keep 
		the information related to their constraints private~\cite{silaghi2000}.
       If variables involved in constraints are considered to belong to 
       only one agent, we can distinguish the revelation of information 
		to agents that participate in the constraint (internal constraint privacy)
		and the one to other agents (external constraint privacy). 	
		While common problems with domain privacy can be straightforwardly modelled as problems with constraint privacy, as discussed later, there theoretically exists a kind of domain privacy which cannot be modelled with constraint privacy.
		\item Assignment privacy: Agents want to keep the 
		assigned values to their variables private.
		The revelation of assigned values concerns the 
		assignment of the final solution, as well as the ones
		proposed during search. 
		\item Algorithmic privacy: Even though it is 
		commonly assumed that all agents run the same algorithm during the solving,
		agents may modify the value of some parameters 
		guiding the search process for some personal benefit (\eg{}
		the likelihood of updating its value). This can be achieved by  keeping the 
		message structure 
		and contracts of certain existing \DisCSP solvers to be used as communication {\em protocols} rather than {\em algorithms}, as introduced in~\cite{silaghi2002comparison},
 where
		protocols obtained in such ways 
		are compared
		with respect to the flexibility offered for agents to hide their secrets.
	\end{description}	
	
	\subsubsection{Existing approaches to privacy in \DisCP}

	We now introduce two examples modelled with \DisCP frameworks, and show how
	existing approaches to privacy deal with these problems and their limits.
	
	\paragraph{Distributed Constraint Satisfaction Problem}
	
	\begin{example}  \label{ex:discsp}
		Suppose a meeting scheduling problem between a 
		professor (called \AgentOne{}) and two 
		students (\AgentTwo{} and \AgentThree{}). 
		They all consider to agree on a time slot to meet on a given day,
		having to choose between $8~am$, $10~am$ and $2~pm$. Professor 
		\AgentOne{} is unavailable
		at $2~pm$, \AgentTwo{} is unavailable at $10~am$, and \AgentThree{} 
		is unavailable at $8~am$.
		
		There can exist various reasons for privacy. For example,
		\AgentTwo{} does not want to reveal the fact that
		it is busy at $10~am$. 
		The value that 
		\AgentTwo{} 
		associates with not revealing 
		the $10~am$ unavailability is the salary from a second job ($\$ \numprint{2000}$). 
		The utility of finding an agreement for each student 
		is the stipend for their studies ($\$\numprint{5000}$). 
		 For \AgentOne{}, 
		the utility is  a fraction of the value of 
		its project ($\$\numprint{4000}$).
		This is an example of privacy for absent values or constraint tuples.
		
		Further \AgentThree{} had recently boasted to \AgentTwo{} that at
		$8~am$ it interviews for a job, and it would rather
		pay $\$\numprint{1000}$ than to reveal that it is not.
		This is an example of privacy for feasible values of constraint tuples.
	\end{example}

		Similarly, participants associate a cost to the 
		revelation of each availability and unavailability. 		
		Thus, scaling numbers by 1000 for simplicity, corresponding agents associate a cost 
		of $1$ to the revelation of their availability at 
		$8~am$, a cost of $2$ to the one at $10~am$, 
		and a cost of $4$ to the one at $2~pm$. 
		The reward from finding a solution is $4$ for \AgentOne{} and $5$ for both \AgentTwo{} and \AgentThree{}.
	
	For simplicity, in the next sections, we will refer to
	the possible values by their identifier: $1$, $2$, and $3$ (corresponding to $8~am$, $10~am$ and  $2~pm$ respectively).
	As this problem states allowed or forbidden values, it is represented by a \DisCSP as follows:

\begin{itemize}
	\item $\agents=\{\AgentOne{}, \AgentTwo{}, \AgentThree{} \}$
	\item $\vars=\{x_1,x_2,x_3\}$
	\item $\dom=\{ \{1, 2, 3\}, \{1, 2, 3\}$, $\{1, 2, 3\} \}$
	\item $\cons= \{ 
	\{\neg(x_1=x_2=x_3)\}, \{(x_1\not=3)\}, \{(x_2\not=2)\}, \{(x_3\not=1)\} \}$
\end{itemize}
	
	As it can be observed, \DisCSPs cannot model the information 
	concerning privacy. Now we will show how existing extensions model them.
	
	\paragraph{Distributed Private Constraint Satisfaction Problem
		(\DisPrivCSP)} models the privacy loss for individual 
	revelations~\cite{freuder2001,silaghi2002comparison}. It also lets
	agents abandon the search process when
	the incremental privacy loss overcomes the expected gains from finding
	a solution. Each agent pays a cost if the feasibility of each solution
	is determined by other agents. The reward for solving the problem is given
	as a constant. Those concepts were so far used for evaluating qualitatively existing
	algorithms, but were not integrated as heuristics in the search
	process. Privacy and the usual optimization criteria of Distributed Constraint
	Optimization Problems are merged into a unique criterion~\cite{doshi2008}.
	The additional parameters are a set of privacy coefficients 
	and a set of rewards.

	This framework successfully models all the information
	described in the initial problem and also measures the privacy loss
	for each agent. However, it was not yet investigated what is the
	impact of the interruptions when privacy loss exceeds the reward
	threshold, its relation to utility, or how agents could use this information to modify their
	behaviour during the search process to preserve more privacy.

	\paragraph{Valuation of Possible States (\VPS)}
	measures privacy loss by the extent to which the possible states of
	other agents are reduced~\cite{wallace2002,maheswaran2005,maheswaran2006,greenstadt2006}.
	Privacy is interpreted as a valuation on the other agents'
	estimates about the possible states that one lives in. 
	During the search process, agents propose their values in an order of
	decreasing preference. At the end of the search process, the
	difference between the presupposed order of preferences and the real
	one observed during search determines the privacy loss: the
	greater the difference, the more privacy has been lost.	
	 
	However in our sample problem,
	agents initially know nothing about others agents
	but the variable they share a constraint with and cannot 
	suppose an order of preference. 
	Agents have no information about others agents 
	privacy requirements.
	Thus, agents do not expect to receive any value proposal 
	more than another.
	In this direction one needs to extend \VPS to be able to also
	model the kind of privacy introduced in this example.

	\paragraph{Partially Known Constraints (\PKC)}
	uses entropy, as defined in information theory, to quantify 
	privacy loss~\cite{brito2009}. 
	In this method, two variables owned by two
	different agents may share a constraint. However, not all the forbidden
	couples of values involved in a constraint are known by both agents. Each agent only knows
	a subset of the constraints. 
	During the search process, assignment privacy is leaked
	through \okm{} and \nogoodm{} messages, like in standard
	algorithms. This problem is solved by not sending the value that is
	assigned to a variable in a \okm{} message, but the set of values compatible
	with this assignment. For \nogoodm{} messages, 
	rather than sending the current assignments,
	an identifier is used to specify the state of the resolution and to
	check if some assignments are obsolete or not.
	
	\PKC assumes that agents only know their 
	own individual unavailabilities~\cite{brito2009}. 
	Only the junction of information known by all agents can rebuild the whole problem.
	However, while \PKC let agents preserve privacy of unary constraints, it does not 
	consider the cost of revelation of assignments.
	
	\paragraph{Distributed Constraint Optimization Problem}
	
	\begin{example} \label{ex:dcop}			
		Suppose a problem concerning scheduling a meeting between three
		participants.  They all consider to agree on a place to meet on a same slot
		time, to choose between London, Madrid and Rome. For simplicity, 
		we will refer to these possible values by their
		identifiers: $1$, $2$ and $3$ (London, Madrid and Rome respectively).  
		
		\AgentOne{} lives in Paris, and it
		will cost it $\$70$, $\$230$ and $\$270$ to attend the meeting in
		London, Madrid and Rome respectively.  \AgentTwo{} lives in
		Berlin, and it will cost it $\$120$, $\$400$ and $\$190$ to attend
		the meeting in London, Madrid and Rome respectively. 
		\AgentThree{} lives in Brussels, and it will cost it $\$40$,
		$\$280$ and $\$230$ to attend the meeting in London, Madrid and Rome
		respectively. The objective is to find the meeting 
		location that minimizes the total cost
		students have to pay in order to attend.  
		
		The privacy costs for revealing
		its cost for locations $1$, $2$ and $3$ for \AgentOne{} are $\$80,\$20,\$40$.
		The privacy cost for locations $1$, $2$ and $3$ are $\$100,\$30,\$10$
		for \AgentTwo{} and $\$80,\$30,\$10$ for \AgentThree{}.  There exist various reasons	for privacy. For example, students may want to keep their cost for
		each location private, since it can be used to infer their initial
		location, and they would pay an additional (privacy) price rather 
		than revealing the	said travel cost. For example, \AgentOne{}
		associates $\$80$ privacy cost to the revelation
		of the travel cost of $\$70$ for meeting in London.
	\end{example}	
	
	This example may be defined by a \DCOP:
	
	\begin{itemize}
		\item
		$\agents=\{A_1, A_2, A_3 \}$
		\item
		$\vars=\{x_1,x_2,x_3\}$
		\item
		$\dom=$ $\{ \{1, 2, 3\}, \{1, 2, 3\}, \{1, 2, 3\} \}$
		\item
		$\cons= \{ 
		\{\as{x_1=1}, 70\}, \{\as{x_1=2}, 230\}, \{\as{x_1=3}, 270\},$ 
		\\
		\hspace*{1.0cm}$ 
		\{\as{x_2=1}, 120\}, \{\as{x_2=2}, 400\}, \{\as{x_2=3}, 190\},$ 
		\\ 
		\hspace*{1.0cm}$ 
		\{\as{x_3=1}, 40\}, \{\as{x_3=2}, 280\}, \{\as{x_3=3}, 230\},$ \\ 
		\hspace*{0.9cm} $\{\neg(x_1=x_2=x_3),\infty\}\}$
	\end{itemize}
	
	The notation $\as{x=a}$ is a predicate $p$ stating that variable $x$ is assigned to value $a$. Each constraint in
	$\cons$ is described with the notation $\{p,v_{i,p}\}$,
	 and  states that if predicate $p$ holds then a cost $v_{i,p}$ is paid by the Agent $A_i$ enforcing the constraint.

	One could attempt to model the privacy requirements by 
	aggregating the solution quality, related to  obtain the reward for a solution (called \solutionCost)   and for keeping the privacy  (called \privacyCost) into a unique cost. However, this is not 
	possible. Indeed, in a \DCOP, agents explore the search space to find a better solution, and only pay the corresponding solution cost 
	when the search is over and  the solution is accepted. 
	This means that the solution cost decreases with time. 
	However, privacy costs are cumulative and are paid during the 
	search process itself (at each time, a solution is proposed), 
	no matter what solution is accepted at the end of the computation. 
	This means that the total privacy loss increases with time. 
	Aggregating the solution costs and privacy costs or using a 
	multi-criteria \DCOP would not consider the privacy cost of the 
	solutions that are proposed but not kept as 
	final. Also, a given solution may imply different privacy losses 
	depending on the algorithm used to reach it. 	
	
	\newcommand{\comparesonprivacies}{
		\begin{table}[htbp]
			\caption{Synthesis of existing approaches to privacy} 
			\label{table:synthesisTable}			
			\centering
			\begin{tabular}
				{
					p{0.14\textwidth}p{0.00\textwidth}
					p{0.14\textwidth}p{0.14\textwidth}
					p{0.14\textwidth}p{0.14\textwidth}
				} \\ \toprule		
				&& \multicolumn{4}{l}{Privacy Type} \\  \cmidrule{3-6} 
				Framework && Value & Assignment & Constraint & Algorithmic \\ 
				\cmidrule{1-1} \cmidrule{3-6} 
				DisPrivCSP   && \cmark & \xmark & \cmark & \xmark \\
				VPS          && \xmark & \cmark & \xmark & \xmark \\
				PKC          && \xmark & \cmark & \xmark & \xmark \\
				SSDPOP       && \xmark & \cmark & \xmark & \xmark \\
				P-SyncBB     && \cmark & \xmark & \cmark & \xmark \\
				\bottomrule
			\end{tabular}
		\end{table}	
		
		Table~\ref{table:synthesisTable} sums up how existing approaches deal
		with different kinds of privacy loss in \DisCP. We see that none of them 
	}
	
	None of the previous techniques 
	consider all aspects of privacy, and while they preserve some 
	privacy, related algorithms require resources or properties from 
	the \DisCSP or \DCOP that may not be possible due to requirements dictated 
	by the real world problem. We therefore propose to deal with privacy 
	directly from the agents' decision making perspective, using a utility-based approach.

	\section{Utilitarian Distributed Constrained Problem}\label{Contribution}
	
	This section defines our utility-based framework (\ref{defi}),
	and then describes how it models previously presented problems,
	with different types of privacy requirements (\ref{description}).
	
	\subsection{Definitions} \label{defi}
	
	While some previously described frameworks do
	model the details of our example regarding privacy, it has until now been an open
	question as to how they can be dynamically used by algorithms in the
	solution search process.
	It can be noticed
	that the rewards and costs in our problem are similar to the
	utilities commonly used by planning 
	algorithms~\cite{koenig2002}. Thus, we propose to define a framework 
	that specifies the elements of the 
	corresponding family of planning problems.
	To do so, we ground the theory of our interpretation of 
	privacy in the well-principled theory of 
	utility-based agents~\cite{savauxPrivacite}.
	
	\begin{definition}
		A utility-based agent is characterized by its ability to associate 
		a value to each 
		state of the problem, representing its contentment to be in this state.
	\end{definition}
	
	For example, the {\it state} of Agent $A_i$ can include the subset of $\dom_i$ that it has
	revealed and the reward associated to the solution under consideration. 
	The problem is to define a policy for each agent such that their utility is maximized.
	A policy is a function that associates each state with an action that should be performed in it~\cite{russell1995}. We evaluate the utility of a state as follows:
	
	\begin{equation}
	Utility = \sum_i {reward}_i - \sum_j {cost}_j \text{ where } reward_i, cost_j \in \mathbb{R^+} \label{eq_utility}
	\end{equation} 
	
	Possible actions are communications dictated by 
	the used solver, and each possible solution proposal, acceptation or rejection, 
	is associated with its corresponding reward (for increasing the probability of  problem solving) and cost (for revealing 
	information).  
	
	Thus, unlike for \DisCPs, the solution of a \UDisCP  does not necessarily include an agreement,
	as pursuing the search may imply a decrease of utility.
	As privacy is lost during search, a given set of assignments can have different 
	utilities, depending on the information exchanged before.
	We define  Utilitarian Distributed Constrained Problem  (\UDisCP): 
	
	\begin{definition}
		A 
		Utilitarian Distributed Constrained Problem (\UDisCP) is formally defined  as a tuple $\langle \agents,\vars,\dom,\cons,\Util,\Rew \rangle$, \ie a \DisCP with:
		\begin{itemize}
			\item
			$\Util=\langle \Util_d,\Util_a,\Util_g,\Util_c \rangle$ is a quadruplet where: 
			\begin{itemize}
				\item $\Util_d$ is a matrix of domain privacy costs,
				where		$u_{d(i,j)}$ is the cost for $A_i$ to reveal whether
				$j\in \dom_{i}$. 
				\item $\Util_a$
				is a matrix of assignment privacy costs,
				where $u_{a(i,j)}$ is the cost for $A_i$ to reveal  
				a local solution including the assignment $x_i=j$. 
				\item $\Util_g$
				is a matrix of assignment privacy costs,
				where
				$u_{g(i,j)}$ is the cost for $A_i$ to reveal  
				the existence of a global solution including the assignment $x_i=j$. 
				\item $\Util_c$ is a vector of constraint privacy costs,
				where $u_{c(i,j)}$ is the cost  for $A_i$ to reveal 
				the weight of constraint $C_j$.
			\end{itemize}
			\item 
			$\Rew=\{ r_1,\dots,r_n\}$ is a vector of rewards,
			where $r_i$ is the reward that Agent $A_i$ receives if an agreement is found.
		\end{itemize}
	\end{definition}

	Note that is the case where a domain can be represented on a computer as a type of a parameter to a predefined software function modelling constraints (\eg an integer), then $\Util_d$ can be embedded in $\Util_c$, as a unary constraint on the domain. 
	We note also the notations \UDisCSP for Utilitarian \DisCSP and \UDCOP for Utilitarian \DCOP.

	\subsection{Description on Problems with Privacy} \label{description}
	
	In this section, we present how \UDisCP deals with the question of privacy 
	on the two previous examples, and we solve them using our utilitarian extensions 
	of existing solvers. For clarity, unchanged parts of pseudo-codes are shown in gray colour (while our extensions are in black colour). These parts may include variables or procedures that are used in other parts of the solvers but that are not used (and therefore not detailed) in this work. 
	
	\subsubsection{Privacy of Domains}
	
We propose to revise our previous example and present the extensions for two standard solvers (\SyncBT and \ABT).	
	
	\begin{example}\label{ex:udiscsp}
		Recall	$\Util_d$ represents the cost for each agent to reveal  possible values.
		$\Rew$ is the reward that each agent gets when a solution   is found, motivating them to initiate the solving.
		
		\DisCSP introduced in  Example~\ref{ex:discsp} is  extended to \UDisCSP 
		by specifying the additional parameters $\Util_d$ and $\Rew$ defined as follows.
		For example, $u_{d(1,3)}$ refers to the cost for the first agent to reveal the third value in its domain, namely $4$.
		
		\begin{equation*}
		\Util_d=
		\begin{blockarray}{*{3}{c} l}
		\begin{block}{*{3}{>{$\footnotesize}c<{$}} l}
		\end{block}
		\begin{block}{[*{3}{c}]>{$\footnotesize}l<{$}}
		1 & 2 & 4 &  $A_1$ \\
		1 & 2 & 4 &  $A_2$ \\
		1 & 2 & 4 &  $A_3$ \\
		\end{block}
		\end{blockarray}
		\hspace{2cm}	
		\Rew=\{4,5,5\}
		\end{equation*}
	\end{example}

	Now we discuss how the standard \ABT{} and \SyncBT algorithms are adjusted to \UDisCSPs.
	After each state change, each agent computes the estimated 
	utility of the state reached by each possible action, and selects randomly 
	one of the available
	actions leading to the state with the maximum expected utility.
		 
	\SyncBTU and \ABTU are obtained by 
	similar modifications of  
	\SyncBT and \ABT~\cite{yokoo1992,yokoo1998,zivan2003}, respectively.
	In our extended algorithms, agents compute the frequency of rejection of their 
	solution proposal to estimate the expected utilities. 
	This frequency can be re-evaluated at any moment based on data
	recorded during previous runs on problems of similar tightness
	(\ie having the same proportion of forbidden instantiations).
	Learning from previous experience has been extensively studied~\cite{Arbelaez2010,Arbelaez2012}. 	
	The learning can be off-line or on-line. 
	For off-line learning, agents calculate the number of 
	messages \okm{} and \nogoodm{} sent during previous executions, called $count$.
	They also count how many messages previously sent lead to the termination
	of the algorithm, in the variable $agreementCount$.
	The frequency with which a 
	solution leads to the termination of the algorithm, 
	is called $\risk{}$ (Equation~\ref{eq_agreementProb}). 
  	For on-line learning, one can update the variables $count$, $agreementCount$ and $\risk{}$
  dynamically whenever the corresponding events happen. When previous experiments are not available, 
  the value of $\risk{}$ is set to $\sfrac{1}{2}$  by default (this value is always between $0$ and $1$).

	\begin{equation} \label{eq_agreementProb}
	\risk{} = \frac{agreementCount}{count}
	\end{equation}

	When \okm{} messages are sent, the agent has the choice of which assignment
	to propose. When a \nogoodm{} message is scheduled to be sent, agents also
	have choices of how to express them. 
	Before each \okm{} or \nogoodm{} message, the
	agents check which available action leads to the highest expected utility. 
	If the highest expected utility is lower than the current one, the agent announces failure. 
	The result is used to decide between proposing assignments, a nogood, or declaring failure.
	
	\paragraph{Synchronous Backtracking with Utility (\SyncBTU)}
	
	\SyncBTU is obtained by restricting the set of  actions to the
	standard communicative acts of \SyncBT, namely \okm{} and \nogoodm{} messages.
	The procedures of a solver like \SyncBT define a policy, 
	since they only identify
	a set of actions (inferences and communications) 
	to be performed in each state. A state of an agent in \SyncBT is defined by \agentview{} 
	and a current assignment of the local variable.
	The local inferences in \SyncBTU are obtained 
	from the ones of \SyncBT by an extension
	exploiting the utility information available. 
	The criteria in this research was not to guarantee
	an optimal policy but to use utility with a minimal 
	change to the original behaviour of \SyncBT 
	reinterpreted as a policy.
	In \SyncBTU, the state is extended to also contain
	a history of revelations of one's values defining an accumulated privacy loss,
	and a probability to reach an agreement with each action. Since~\cite{yokoo1992} does not provide
	pseudo-code for \SyncBT, we modify 
	the pseudo-code
	presented in~\cite{zivan2003} for \SyncBTU: \assignCPA (before Line~$7$), 
	and \backtrack (before Line~$6$). As \SyncBTU is close from \ABTU, we do not detail 
	the pseudo-code for this algorithm in this report.
	
	\paragraph{Asynchronous Backtracking with Utility (\ABTU)}
	
	Similar modifications are applied 
	to \ABT to obtain \ABTU: communications  of \ABTU are 
	composed of \okm{}, \addlinkm{} and \nogoodm{}. The state and local inferences of \ABTU are similar to \SyncBTU, while also containing the set of nogoods.
	
	\begin{algorithm}
	\KwIn{$\dom_{self}$, \agentview, $agreementProb$, $r_{self}$}
	\SetKwFunction{algo}{algo}\SetKwFunction{proc}{proc}
	\SetKw{KwWhen}{when}
	\SetKw{KwDo}{do}
	\KwWhen $agentView$ and $currentValue$ are inconsistent \KwDo{} \\
	{	\setcounter{AlgoLine}{1}
		\color{gray}
		\uIf {no value in $\dom_{self}$ is consistent with \agentview{}}
		{\backtrack\;}
		\Else{
			select $d \in \dom_{self}$ where \agentview{} and $d$ are consistent\;
			$currentValue \leftarrow d$ \; \color{black}
		 \uIf {(\estimateCostDisCSP($\risk{}$, $\dom_{self}$, $1$) $\geqslant r_{self}$)}
			{\interruptSolving\;\label{ln:abtu10}}
			 \Else 
			{\color{gray} send (\textbf{ok?},($x_{self}$;$d$)) to outgoing links} 
		} 
	}
	\setcounter{AlgoLine}{0}
	\caption{\checkagentviewABTU}\label{algo:checkAgentViewABTU}	
\end{algorithm}

	To calculate the estimated utility of pursuing an agreement 
(revealing an alternative assignment), 
the agent considers all different possible scenarios of
the subsets of values that might have to be revealed in the future
based on possible rejections received, together with their probability
(Algorithm~\ref{algo:estimateCostDisCSP}).  This algorithm assumes as
parameters: (i) $\risk{}$ (Equation~\ref{eq_agreementProb}), (ii) an ordered set of possible values $\dom'_{self}$ for a scenario (by default, the order proposed in $\dom_{self}$), and (iii)  the probability to select a value from $\dom'_{self}$, initially $1$), called $\probabilityD$. Note that $\dom'_{self}[j]$ refers to value at index $j$ of $\dom'_{self}$.

	\begin{algorithm}
	\KwIn{$\risk{}$, $\dom'_{self}$, $\probabilityD$}
	\KwOut{\estcost{}}
	$valueId = j$ $|$ ($\dom_{self}[j] = \dom'_{self}[1]$)\;
	\uIf {($|\dom'_{self}| = 1$)}
	{\Return ($\sum_{j=1}^{j=valueId} u_{d(self,j)}$) $\times \probabilityD$\;}
	\Else
	{   	
		$v \leftarrow \dom'_{self}[1] $ \;
		\costTerminal $\leftarrow$ 
		\estimateCostDisCSP ($\risk{}$, \{$v$\}, $\risk{} \times \probabilityD$)\;
		$\CostTemp$ $\leftarrow$ 
		\estimateCostDisCSP ($\risk{}$, $\dom'_{self}\setminus\{v\}$, ($1-\risk{}$) $\times$ $\probabilityD$)\; 
		\estcost $\leftarrow$ $\costTerminal + \CostTemp $\; 
		\Return \estcost\; 
		\caption{{\estimateCostDisCSP}} \label{algo:estimateCostDisCSP}
	} 
\end{algorithm}

The algorithm then recursively computes the utility of the next possible
states, and whether the revelation of the current value $v$ leads to the
termination of the algorithm, values which are stored in variables $\costTerminal{}$ and $\CostTemp$. The algorithm returns the estimated cost (called \estcost) of privacy
loss for the future possible states currently.

	\begin{example}\label{ex:tree}
		
		Continuing with Example~\ref{ex:discsp} (whose a solving is illustrated by Figure~\ref{figure:ABT}), at the beginning of the solving,
		Agent \AgentOne{} has to decide for a first action to perform.
		We suppose the $\risk{}$ learned from previous solvings is $\numprint{0.5}$.
		To decide whether it should propose an available value or not, 
		it calculates the corresponding \estcost{} by calling 
		Algorithm~\ref{algo:estimateCostDisCSP} with parameters: 
		the learned $\risk{} = \numprint{0.5}$, 
		the set of possible solutions ($\dom'_{1}=\{1,2,3\}$) and 
		$\probabilityD=1$. 
		
		For each possible value, this algorithm recursively sums the cost for the 
		two scenarios corresponding to whether the action leads immediately to termination, or not.
		Given privacy costs, the availability of three possible subsets of $\dom'_{1}$ may be revealed in this problem: $\{1\}, \{1,2\}$, and $\{1,2,3\}$. Each set of size $S$ consists of  $S$ first elements of the list solution based on this initial order.
		 
		The \estcost{} returned is the sum of the costs for all 
		possible sets, weighted by the probability of their feasibility being revealed if an agreement is pursued. \\
		At the function call: 
		$\costTerminal=u_{d(1,1)} \times \numprint{0.5} = 1\times 0.5= 0.5$. \\
		At the next recursion: 
		$\costTerminal=(u_{d(1,1)}+u_{d(1,2)}) \times \numprint{0.25} = (1+2)\times0.25=0.75$. \\
		At the last recursion: $\costTerminal=(u_{d(1,1)}+u_{d(1,2)}+u_{d(1,3)}) \times \numprint{0.25}=(1+2+4)\times0.25=1.75$. 
		The algorithm returns the sum of these three values: $\estcost = $ 
		$0.5+0.75+1.75=3$. \\
		The expected utility  of pursuing a solution being positive 
		($reward - \estcost =4-3=1$), the first value is proposed.
		
	\end{example}	
	
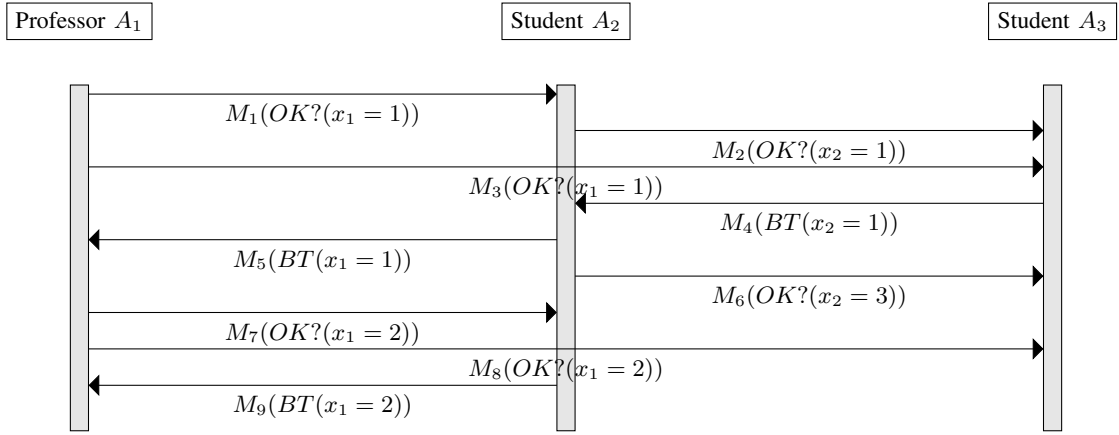
\begin{figure}
	\begin{center} 
		\begin{tikzpicture}[every node/.style={font=\small,}]
		\node [matrix, very thin,column sep=1.5cm, row sep=0.25cm] (matrix) at (0,0) {
			& \node(0,0) (\Aa) {}; 						&&&&			 		\node(0,0) (\Ab) {}; 					&&&& 					\node(0,0) (\Ac) {}; & \\ 		
			& \node(0,0) (\Aa 0) {}; 					&&&&		 			\node(0,0) (\Ab 0) {}; 					&&&&		 			\node(0,0) (\Ac 0) {}; & \\ 
			& \node(0,0) (\Aa 1) {}; 			&&\node(0,0) (\M 1) {};&& 		\node(0,0) (\Ab 1) {}; 					&&&& 					\node(0,0) (\Ac 1) {}; & \\ 
			& \node(0,0) (\Aa 2) {}; 					&&&&		 			\node(0,0) (\Ab 2) {}; 			&&\node(0,0) (\M 2) {};&&		\node(0,0) (\Ac 2) {}; & \\ 
			& \node(0,0) (\Aa 3) {}; 					&&&&		 			\node(0,0) (\M 3) {}; 					&&&&		 			\node(0,0) (\Ac 3) {}; & \\ 
			& \node(0,0) (\Aa 4) {}; 					&&&&		 			\node(0,0) (\Ab 4) {}; 			&&\node(0,0) (\M 4) {};&&		\node(0,0) (\Ac 4) {}; & \\ 
			& \node(0,0) (\Aa 5) {}; 			&&\node(0,0) (\M 5) {};&&		\node(0,0) (\Ab 5) {};					&&&&					\node(0,0) (\Ac 5) {}; & \\ 
			& \node(0,0) (\Aa 6) {}; 					&&&&					\node(0,0) (\Ab 6) {}; 			&&\node(0,0) (\M 6) {};&&		\node(0,0) (\Ac 6) {}; & \\ 
			& \node(0,0) (\Aa 7) {}; 			&&\node(0,0) (\M 7) {};&&		\node(0,0) (\Ab 7) {};					&&&&					\node(0,0) (\Ac 7) {}; & \\ 
			& \node(0,0) (\Aa 8) {};					&&&&					\node(0,0) (\M 8) {}; 					&&&&					\node(0,0) (\Ac 8) {}; & \\ 
			& \node(0,0) (\Aa 9) {};			&&\node(0,0) (\M 9) {};&&		\node(0,0) (\Ab 9) {}; 					&&&&					\node(0,0) (\Ac 9) {}; & \\ 
			& \node(0,0) (\Aa 10) {}; 					&&&&					\node(0,0) (\Ab 10) {}; 				&&&&		 			\node(0,0) (\Ac 10) {}; & \\ 	
		};
		
		\fill 
		(\Aa) node[draw,fill=white] {\Aaa}
		(\Ab) node[draw,fill=white] {\Aab}
		(\Ac) node[draw,fill=white] {\Aac};	
		
		\filldraw[fill=gray!20]
		(\Aa 1.north west) rectangle (\Aa 10.south east)
		(\Ab 1.north west) rectangle (\Ab 10.south east)
		(\Ac 1.north west) rectangle (\Ac 10.south east);	
		
		\draw [-latex, -triangle 90] (\Aa 1) -- (\Ab 1);
		\draw [-latex, -triangle 90] (\Ab 2) -- (\Ac 2);
		\draw [-latex, -triangle 90] (\Aa 3) -- (\Ac 3);
		\draw [-latex, -triangle 90] (\Ac 4) -- (\Ab 4);
		\draw [-latex, -triangle 90] (\Ab 5) -- (\Aa 5);
		\draw [-latex, -triangle 90] (\Ab 6) -- (\Ac 6);
		\draw [-latex, -triangle 90] (\Aa 7) -- (\Ab 7);
		\draw [-latex, -triangle 90] (\Aa 8) -- (\Ac 8);	
		\draw [-latex, -triangle 90] (\Ab 9) -- (\Aa 9);	
		
		\fill
		(\M 1) 
		node[below] {$M_{1} (OK?(x_{1}=1))$}
		(\M 2) 
		node[below] {$M_{2} (OK?(x_{2}=1))$}
		(\M 3) 
		node[below] {$M_{3} (OK?(x_{1}=1))$}
		(\M 4) 
		node[below] {$M_{4} (BT(x_{2}=1))$}
		(\M 5) 
		node[below] {$M_{5} (BT(x_{1}=1))$}
		(\M 6) 
		node[below] {$M_{6} (OK?(x_{2}=3))$}
		(\M 7) 
		node[below] {$M_{7} (OK?(x_{1}=2))$}
		(\M 8) 
		node[below] {$M_{8} (OK?(x_{1}=2))$}
		(\M 9) 
		node[below] {$M_{9} (BT(x_{1}=2))$};			
		\end{tikzpicture}
	\end{center}
	\caption{Interactions between agents during \ABT}\label{figure:ABT}
\end{figure}

	Next is an illustrative example of other \ABTU{} operations with the scenario $\{1,3,2\}$.

	\begin{example}	\label{ex:abtu}
		With the original \ABT{}, \AgentTwo{} proposes $x_2=1$ in 
		Message $M_2$ and $x_2=3$ in Message $M_6$.
		In this case, the privacy loss for \AgentTwo{} is $u_{d(2,1)}+u_{d(2,3)}=1+4=5$.		
		However, with \ABTU{}, we do not only use the actual utility of the
		next assignment to be revealed, but we estimate privacy loss using 
		Algorithms~\ref{algo:checkAgentViewABTU} and~\ref{algo:estimateCostDisCSP}. After \AgentTwo{} has already sent $x_2=1$ with $M_2$, it considers sending $x_2=3$ with
		$M_6$. This decision making process is depicted in Figure~\ref{figure:tree}. 
		If the next value, $2~pm$, is
		accepted, \AgentTwo{} will reach the final state while having
		revealed $x_2=1$ and $x_2=3$, for a total privacy cost of $u_{d(2,1)} + u_{d(2,3)} = 1+4= 5$. If it is not,
		the unavailability of the last value $x_2=2$ will have to be revealed
		to continue the search process, leading to the revelation of all its
		assignments for a total cost of $7$. Since both these scenarios have a probability
		of $50\%$ to occur,  \estcost{} equals
		$\sfrac{(5+7)}{2}=6$. The utility ($reward - \estcost{}$) being equal 
		to $5-6=-1$, \AgentTwo{}
		has no interest in revealing $x_2=3$ and interrupts the solving. Its
		final privacy loss is only $u_{d(2,1)}=2$. The utility of 
		the final state		
		reached by \AgentTwo{} being $-2$ with \ABTU{}, and $-4$ with
		\ABT{}, \ABTU{} preserves more privacy than \ABT{} in this problem. 
	\end{example}

	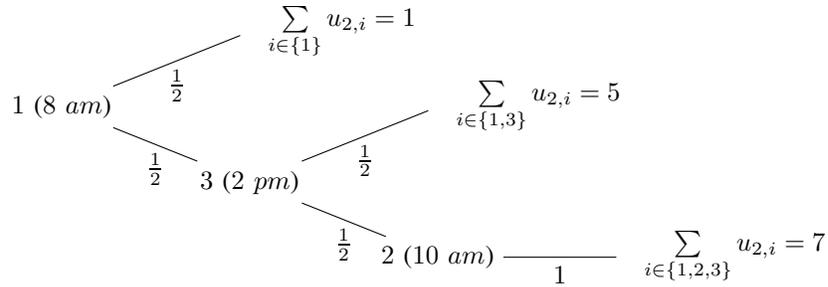
\begin{figure}
	\begin{center}		
		\scalebox{1.0}{
			\tikzstyle{level 1}=[level distance=2.5cm, sibling distance=2cm]
			\tikzstyle{level 2}=[level distance=2.5cm, sibling distance=2cm]
			\tikzstyle{bag} = []
			\tikzstyle{end} = []
			\begin{tikzpicture}[grow=right]
			\node[bag] {$1$ ($8~am$)}
			child {
				node[bag] {$3$ ($2~pm$)}        
				child {	node[bag] {$2$ ($10~am$)}    
					child{
						node[end, label=right:{$\sum\limits_{i\in \{1,2,3\} } u_{2,i}=7$}] {}       		
						edge from parent         
						node[above] {}
						node[below]  {$1$}
					}
					edge from parent         
					node[above] {}
					node[below]  {$\frac{1}{2}$}
				}
				child {
					node[end, label=right:{$\sum\limits_{i\in \{1,3\} } u_{2,i}=5$}] {}       		
					edge from parent         
					node[above] {}
					node[below]  {$\frac{1}{2}$}
				}					
				edge from parent         
				node[above] {}
				node[below]  {$\frac{1}{2}$}
			}
			child {
				node[end, label=right: {$\sum\limits_{i\in \{1\} } u_{2,i}=1$}] {}
				edge from parent
				node[above] {}
				node[below]  {$\frac{1}{2}$}
			};
			\end{tikzpicture}	
		}	
		\caption{Cost estimation for all scenarios by \AgentTwo{} during \ABTU}\label{figure:tree}	
	\end{center}
\end{figure}
	
	\subsubsection{Privacy of Assignments}	

We would like illustrate how our approach deals with  privacy of assignment in \UDCOP by extending the previous example and solving it with \ADOPTU.

	\begin{example}\label{ex:udcop}
		 \DCOP{} in Example~\ref{ex:dcop} is extended to a \UDCOP{}
		by specifying the additional parameters $\Util_a$ and $\Rew$.
		$\Util_a$ represents the cost for each agent to reveal each assigned value.
		$\Rew$ is a default reward (for example, $1000$) that each agent gets for finding a solution to the problem, motivating them to initiate the solving. 		
		The description is as follows:
		
		\begin{equation*}
		\Util_a=
		\begin{blockarray}{*{3}{c} l}
		\begin{block}{*{3}{>{$\footnotesize}c<{$}} l}
		\end{block}
		\begin{block}{[*{3}{c}]>{$\footnotesize}l<{$}}
		80 & 20 & 40 & $A_1$ \\
		130 & 30 & 10 &  $A_2$ \\
		80 & 30 & 10 &  $A_3$ \\
		\end{block}
		\end{blockarray}
		\hspace{2cm}	
		\Rew=\{1000,1000,1000\}
		\end{equation*}		
		
	\end{example}
	
	Now we discuss how the standard \DCOP{} algorithms are adjusted 
	to \UDCOPs{}. 
	After each state transition, each agent computes the estimated 
	utility of the state reached by each possible action, and selects 
	randomly one of the actions leading to the state with the 
	maximum expected utility.
	In our algorithms, agents estimate expected utilities using the risk of one of their assignments to 
	not be a part of the final solution. 

To estimate the cost for a \DCOP solution, \estimateCostDCOP is introduced (Algorithm~\ref{algo:estimateCostDCOP}). Its inputs are the utilities considered ($\Util_i$), the domain of possible values ($\dom_{self}$) and the already revealed informations 
($revealedInfos$). For each solution, the algorithm evaluates \estcost including both constraint costs (\ie \solutionCost{} and \privacyCost). Recall that the utility of the reached state is calculated by using Equation~\ref{eq_utility}, and equals the fixed reward (by hypothesis) reduced by \estcost. If the initial \DCOP is a maximization problem, it is 
first recast as a minimization one, 
so that constraint values correctly belong to costs.   

\begin{algorithm}[]
	\KwIn{$utilities$, $\dom_{self}$, $revealedInfos$} 
	\KwOut{\estcost}
	\solutionCost{} $\leftarrow 0$\;
	\privacyCost{} $\leftarrow 0$\;
	\ForEach{value $d \in \dom_{self}$}
	{\ForEach{constraint $c \in \cons_{self}$}
		{
			\If{($c$ contains predicate $p$ such as $p= (x_i = d)$) $\wedge$ ($d \in revealedInfos$)}	
			{
				\solutionCost{} $\leftarrow$ \solutionCost{} $+$ $v_{self,p}$ \;
				\privacyCost{} $\leftarrow$ \privacyCost{} $+ utilities_{(self,d)}$\;
			}				
		}
	}
	\estcost $\leftarrow$ \solutionCost{} $+$ \privacyCost{}\;
	return \estcost\;
	\caption{\estimateCostDCOP}\label{algo:estimateCostDCOP}
\end{algorithm}
	
	Before proposing a new value, agents estimate the utility that will 
	be reached in the next state. This value is the summation of 
	the costs of revealed \agentview{}s 
	(weighted by their probability to be the final solution) 
	in the said state, and of the corresponding privacy costs.
	If this utility is lower than the estimation of the current state,
	the agent proposes the next value, otherwise it keeps its current value.
	
	\paragraph{Asynchronous Distributed Optimization with Utility (\ADOPTU)}
	is a method obtained
	from \ADOPT{} by adding Lines~$7$ to $10$  in Algorithm~\ref{algo:adoptu} (procedure \checkAgentViewADOPTU).
	At Line~$7$, the possible next value is set to the value that has the minimal cost.
	The cost reached after the next value (Line~$8$) 
	and the cost of the current state (Line~$9$) are estimated.
	At Line~$10$, if the next cost is lower than the current cost, the maximal improvement and the new value are updated.

	\begin{algorithm}
	\KwIn{$utilities$, $\dom_{self}$, $revealedAssignments$}
	\SetKw{KwForE}{for each}
	\SetKw{KwDo}{do}
	\color{gray}
	\KwForE{} ($d \in \dom_{self}$) update $l[d]$ and recompute $h[d]$\;
	\KwForE{} $A_j$ with higher priority than $A_{self}$ 
	\KwDo{}\;
	\If {($h$ has non-null cost CA for all values of $\dom_{self}$)}
	{
		$vn \leftarrow$ min\_resolution($j$)\;
		\If {$vn \neq lastSent[j]$}
		{
			send \textbf{nogood}($vn$,$self$) to $A_j$\;
		}	
	}
	$newValue \leftarrow$  argmin$_d$($cost[h[d]]$)\;
	\color{black}
	$nextCost \leftarrow$ \estimateCostDCOP($utilities$, $\dom_{self}$, $revealedAssignments \cup newValue$)\;
	$currentCost \leftarrow$ \estimateCostDCOP($utilities$, $\dom_{self}$, $revealedAssignments$)\;
	\If {($nextCost < currentCost$)}
	{
			\color{gray}
		assign $newValue$ to $x_{self}$ \; 
		\If {($x_{self}$ was modified)} 
		{
			send \textbf{ok?} to each neighbour to inform them about the value change\;
		}
	}
	\caption{\checkAgentViewADOPTU}\label{algo:adoptu}	
\end{algorithm} 
	
	\begin{example}		\label{ex:adoptu}
		Continuing with Example~\ref{ex:dcop}, 
		at the beginning of the computation with 
		the  \ADOPTU{} solver, 
		the participants select a random value.
		The resulting set of assignments is $x_1=1$, $x_2=3$ and $x_3=2$
		for \AgentOne{}, \AgentTwo{}, and \AgentThree{} respectively. 
		The participants then inform their values to their linked descendants 
		(\AgentTwo{} and \AgentThree{} for \AgentOne{}, and \AgentThree{} for \AgentTwo).
		
		\AgentTwo{} will have  $\agentview{}=\{(x_1=1)\}$
		and chooses the value $1$, since this value minimizes its cost.
		\AgentThree{} sends a VIEW message with cost of $190$ to \AgentTwo{}.
		This cost is reported because given the value of $x_1$, it is a lower bound on global solution cost:
		$190$ is a lower bound on local cost at $x_2$, and $0$ is a lower bound on local costs at other variables.	
		Now concurrently with the previous execution of $A_2$, 
		$A_3$ receives the assignment $x_2=3$.
		With \ADOPTU{}, $A_3$ realizes that changing its value would increase
		its cost and decides to keep its value unchanged.
		However, with \ADOPT, $A_3$ decides to change its value to $3$, as 
		it reduces the cost.	
		Then, with \ADOPT and \ADOPTU, $A_3$ 
		receives the assignments $x_1=1$ and $x_2=1$, 
		and decides to change its value to $x_3=1$.
		At the final step, the previous \agentview{} is the
		optimal solution. With \ADOPTU{}, the reached costs are:
		${70+80=150, 120+30+100=250, 80+80=160}$ 
		for \AgentOne{}, \AgentTwo{}, and \AgentThree{} respectively. 
		With standard \ADOPT, the final utilities are:	
		${70+80=150,120+30+100=250,80+80+30=190}$
		for \AgentOne{}, \AgentTwo{}, and \AgentThree{} respectively. 
		Therefore, by using \ADOPTU{} instead of \ADOPT{}, \AgentThree{} avoids a futile revelation of $x_3=2$
		and reduces its utility by $30$.  
	\end{example}
	
	\subsubsection{Privacy of Constraints}
		
Similarly, we detail the model for our example and we present the extensions of two solvers for \UDCOPs (namely \DBOU and \DSAU).
	
	\begin{example}\label{ex:udcoppc}	
		\DCOP{} described in Example~\ref{ex:dcop} is extended to a \UDCOP{} by specifying the
		parameters $\cons$, $\Util_c$ and $\Rew$ as follows.
		$\cons$ is the updated set of constraints.
		$\Util_c$ represents the cost for each agent to reveal each assignment cost.
		$\Rew$ is a default reward that each agent gets for finding a solution to the problem, motivating them to initiate the solving. 	
		$\cons$'s subsets known to each agent are:\\
		\noindent
		$\cons_1=\{ 
		c_{1,1}=[\as{x_1=1}, 70], 
		c_{1,2}=[\as{x_1=2}, 230],$\\ \hspace*{1.05cm} $
		c_{1,3}=[\as{x_1=3}, 270],
		c_{1,4}=[\neg(x_1=x_2=x_3),\infty]$\}
		\\
		$\cons_2=\{ 
		c_{2,1}=[\as{x_2=1}, 120],
		c_{2,2}=[\as{x_2=2}, 400],$\\ \hspace*{1.05cm} $
		c_{2,3}=[\as{x_2=3}, 190],
		c_{2,4}=[\neg(x_1=x_2=x_3),\infty]$\} 
		\\ 
		$\cons_3=\{ 
		c_{3,1}=[\as{x_3=1}, 40], 
		c_{3,2}=[\as{x_3=2}, 280],$\\ \hspace*{1.05cm} $
		c_{3,3}=[\as{x_3=3}, 230],
		c_{3,4}=[\neg(x_1=x_2=x_3),\infty]\}
		$ 
		\begin{equation*}
		\Util_c=
		\begin{blockarray}{*{3}{c} l}
		\begin{block}{*{3}{>{$\footnotesize}c<{$}} l}
		\end{block}
		\begin{block}{[*{3}{c}]>{$\footnotesize}l<{$}}
		80 & 20 & 40 & $A_1$ \\
		100 & 30 & 10 &  $A_2$ \\
		80 & 30 & 10 &  $A_3$ \\
		\end{block}
		\end{blockarray}
		\hspace{2cm}	
		\Rew=\{1000,1000,1000\}
		\end{equation*}
		
	\end{example}
	
	To adapt existing algorithms for privacy of constraints,
	revealed domains and possible revealed domains are changed to the
	revealed constraints and possible revealed constraints, respectively.

	\paragraph{Distributed Breakout with Utility (\DBOU)}
	is a solver obtained
	from \DBO{} by adding Lines~$5$ to $11$ in Algorithm~\ref{algo:dbou} (procedure \sendImproveDBOU).
	At each iteration, \DBOU does not only uses solution cost to guide search and 
	computes the value giving maximal improvement as in standard \DBO (Lines~$2$ to $4$), 
	but also considers constraint privacy costs (Lines~$5$ to $7$).
	The new chosen value is the one minimizing total cost (Lines~$8$ to $11$).
	As privacy loss is cumulative, agents update the set of revealed constraints 
	to also consider previously revealed constraints during their estimation of 
	the reached cost for the different considered values.

	\begin{algorithm}[t]
	\KwIn{$utilities$, $\dom_{self}$, $revealedConstraints$}
	\SetKwFunction{algo}{algo}\SetKwFunction{proc}{proc}
	\SetKw{KwWhen}{when}
	\SetKw{KwDo}{do}
	\color{gray}
	$ eval \leftarrow$ evaluation value of $x_{self}$\;	
	
	$myImprove \leftarrow 0$ \; 
	$newValue \leftarrow x_{self}$\;
	
	$nextValue \leftarrow$ value giving maximal improvement\;
	\color{black}
	$nextRevealedConstraints \leftarrow revealedConstraints$ $\cup$ constraints with $nextValue$ \;
	$nextCost \leftarrow$ \estimateCostDCOP($utilities$, $\dom_{self}$, $nextRevealedConstraints$)\;
	$currentCost \leftarrow$ \estimateCostDCOP($utilities$, $\dom_{self}$, $revealedConstraints$)\;
	\If {($nextCost < currentCost$)}
	{
		$myImprove \leftarrow$ possible max improvement\;
		$newValue \leftarrow$ value giving maximal improvement\;
		 $x_{self} \leftarrow newValue$ \;
	}
   \color{gray}
	\uIf {($eval = 0$)}
	{$consistent \leftarrow true$}
	\Else{
		$consistent \leftarrow false$ \;
		}
	$terminationCounter \leftarrow 0$ \;
	\uIf {($myImprove > 0$)}
	{$canMove \leftarrow true$ \;
		$quasiLocalMin \leftarrow false$ \;}
	\Else	
	{$canMove \leftarrow false$ \;
		$quasiLocalMin \leftarrow true$}
	send (\textbf{improve}, $x_{self}$, $myImprove$, $eval$, $terminationCounter$) to neighbours\;
		\color{black}
	\setcounter{AlgoLine}{0}
	\caption{\sendImproveDBOU}\label{algo:dbou}
\end{algorithm} 
	
	\paragraph{Distributed Stochastic Algorithm with Utility (\DSAU)}
	is obtained from standard \DSA
	by adding  Lines~$10$ to $13$ in
	Algorithm~\ref{algo:dsau} (procedure \procDSAU). Each agent computes a solution and sends it to its neighbours (Lines~$1$ to $5$). At each iteration, after collecting new values from neighbours (Line~$6$), each agent compares the new 	
	{\agentview} with the previous one (Line $7$). If a difference is detected, the agent $A_{self}$ computes a new solution considering both solution (Lines~$8$ and $9$) and privacy costs (Lines ~$10$ to $13$) similarly with \ADOPTU and \DBOU.  

	\begin{algorithm}
	\KwIn{$utilities$, $\dom_{self}$, $revealedConstraints$}
	\SetKwFunction{algo}{algo}\SetKwFunction{proc}{proc}
	\SetKw{KwWhen}{when}
	\SetKw{KwDo}{do}
	\color{gray}
	$nextValue \leftarrow$ randomly chosen a value\;
	\While{(no termination condition is met)} 
	{
		\If{($nextValue \neq x_{self}$)}
		{   $x_{self} \leftarrow nextValue$\;
			send $x_{self}$ to neighbours\;}	
		$nextView \leftarrow$  collect $x_i$ for each neighbour $A_i$\; 
		\If{($nextView \neq$ \agentview) } 
		{
			\agentview $\leftarrow nextView$\;	
		$tempValue \leftarrow$ randomly chosen value\;
		\color{black}
		 $revealedConstraints \leftarrow revealedConstraints \cup \cons_{self}$ with $x_{self} = tempValue$\;
		$nextCost \leftarrow$ \estimateCostDCOP  ($utilities$, $\dom_{self}$, $nextRevealedConstraints$)\;
		$currentCost \leftarrow$ \estimateCostDCOP($utilities$, $\dom_{self}$, $revealedConstraints$)\;
		\If {($nextCost < currentCost$)}
			{\color{gray}
			  $x_{self} \leftarrow tempValue$\;
			}
		}
	}
	\color{black}
	\setcounter{algoline}{0}
	\caption{DSAU}\label{algo:dsau}
\end{algorithm}
	
	\begin{example}		\label{ex:dsau}
		Continuing with Example~\ref{ex:dcop}, at the beginning of the
		computation with the \DSAU{} solver, the participants select a random
		value. The resulting \agentview{} of each agent is :\\ $\{(x_1=1),(x_2=1),(x_3=3)\}$.  The utilities of the reached state are: \\
		${v_{1,1}+u_{c(1,1)}=70+80=150}$, \\
		${v_{2,1}+u_{c(2,1)}=120+100=220}$, and \\
		${v_{3,3}+u_{c(3,3)}=230+10=240}$ for \Students{} respectively. \\
		The	participants then inform each other of their value. They 
		consider changing them to a new randomly selected one. The
		considered \agentview{} is $\{(x_1=2),(x_2=3),(x_3=1)\}$.
		If the participants
		change their value, the utilities of the reached states would be: \\
		${(v_{1,1}+v_{1,2})/2+u_{c(1,1)}+u_{c(1,2)}=250}$,\\
		${(v_{2,1}+v_{2,3})/2+u_{c(2,1)}+u_{c(2,3)}=265}$, \\
		${(v_{3,3}+v_{3,1})/2+u_{c(3,3)}+u_{c(3,1)}=225}$, for \Students{} respectively. \\ \AgentOne{} and 
		\AgentThree{} do not propose the new value as it would increase their
		utility. 
		However, \AgentThree{} chooses to change its value from $2$
		to $1$ which lowers its utility from $240$ to $225$. In the next step, 
		\agentview{} is $\{(x_1=1),(x_2=1),(x_3=1)\}$.
		Participants then do not change
		their value any-more, as all other options would not decrease the
		utility.  At the final step, the previous \agentview{} is therefore the
		optimal solution. With \DSAU{}, the reached utilities are : \\
		${70+80=150,120+100=220,40+10+80=130}$  for \Students{} respectively. 
		With standard \DSA, the final utilities are: \\ 
		${(v_{1,1}+u_{c(1,1)}+u_{c(1,2)}+u_{c(1,3)})=230}$, \\
		${(v_{2,1}+u_{c(2,1)}+u_{c(2,2)}+u_{c(2,3)})=260}$, \\
		${(v_{3,1}+u_{c(3,2)}+u_{c(3,1)}+u_{c(3,3)})=160}$, for \Students{} respectively. \\
		Therefore, using \DSAU{} instead of \DSA{} reduces the utility 
		by ${80, 40, 30}$. 
	\end{example}

	In this work, studied problems include only one type of privac
	at a time, to illustrate 
	proposed models and algorithms with simple examples.
	However, problems integrating several types of privacy can also be modelled 
	with \UDisCP. Such problems, where agents would have optimize multiple objectives,
	will be investigated in future works.
	
	\section{Theoretical Discussion}\label{TheoreticalDiscussion}
	
	This section deals with three theoretical studies, \ie  comparisons with \DCOPs{}  (\ref{Compardiscsp}),  \MODCOPs{} (\ref{ComparMODCOP}) and \POMDP{} (\ref{ComparPOMDP}).
	
	\subsection{Comparison with \DCOPs} \label{Compardiscsp}
	The introduced \UDCOP{} framework can assume  that 
	inter-agent constraints are public (without 
	significant loss of generality). This is due 
	to the fact that any problem with private 
	inter-agent constraints, is equivalent with its dual 
	representation where each constraint 
	becomes a variable~\cite{bacchus1998}. 
	
	\begin{theorem}
		\UDCOP{} planning and execution is at least as general as \DCOPs{} solving.
	\end{theorem}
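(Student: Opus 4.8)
The plan is to exhibit a generality-preserving embedding of every \DCOP{} into a \UDCOP{}, and then to argue that computing and executing an optimal policy for the resulting \UDCOP{} recovers an optimal \DCOP{} assignment. First I would note that the two frameworks already share the core tuple $\langle \agents,\vars,\dom,\cons \rangle$, so the only gap is the additional data $\Util$ and $\Rew$. Given an arbitrary \DCOP{} instance, I would build a \UDCOP{} by zeroing out every privacy cost, that is $\Util_d=\Util_a=\Util_g=\Util_c=\mathbf{0}$, and by choosing each reward $r_i$ large enough to dominate the largest total constraint cost any agent can ever incur (symbolically one may take $r_i=\infty$). This is a purely syntactic construction, so the embedding is immediate.

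The second step evaluates the induced utility. With all privacy costs vanishing, the \privacyCost{} branch of \estimateCostDCOP{} (Algorithm~\ref{algo:estimateCostDCOP}) is identically zero, so \estcost{} reduces to the pure constraint cost \solutionCost{}. Equation~\ref{eq_utility} then collapses to a fixed reward constant minus the total constraint cost. Because $\sum_i r_i$ does not depend on the assignment, maximising utility over reachable states is \emph{exactly} minimising total constraint cost, which is the defining objective of the underlying \DCOP{}. Hence the utility-maximising terminal states of the \UDCOP{} coincide with the optimal assignments of the \DCOP{}, and conversely.

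The third step disposes of the single qualitative difference between the models: a \UDCOP{} agent may \emph{interrupt} the search when pursuing an agreement would lower its utility, whereas a \DCOP{} solver always drives towards a complete solution. Because each $r_i$ was chosen to exceed every attainable cost, the expected utility of continuing is always strictly positive, so the interruption guard (the analogue of Line~\ref{ln:abtu10}, or the test $nextCost<currentCost$ used by the \estimateCostDCOP{}-based solvers) never abandons a feasible agreement. Consequently the \UDCOP{} policy degenerates, state by state, to the deterministic behaviour of the base \DCOP{} solver: the planning layer collapses to the solver's fixed procedure and execution collapses to ordinary message passing until a complete solution is reached.

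The part I expect to be the main obstacle is not the embedding but \textbf{making ``at least as general'' precise enough for the reduction to certify it}. I would fix the meaning as expressiveness together with solution-preservation: every \DCOP{} problem, and every solving run of it, is realised as a special case of a \UDCOP{} together with an optimal-policy execution, with the two sets of optimal solutions in bijection. The delicate point inside this is checking that the stochastic ingredient of \UDCOP{} --- the $\risk{}$ and $\probabilityD{}$ weights used by the utility estimators --- does not perturb the optimum; under zero privacy cost these weights multiply only the vanishing \privacyCost{} contributions, so the constraint-cost minimisation they sit on top of is left untouched, which is what finally closes the argument.
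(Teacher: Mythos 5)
Your proposal is correct and takes essentially the same route as the paper: the paper's proof is exactly the reduction that sets all privacy costs to zero so that the resulting \UDCOP{} always pursues an agreement when one exists and its objective coincides with that of the original \DCOP{}. You simply flesh out the details the paper leaves implicit (why the interruption guard never fires, and why utility maximisation collapses to constraint-cost minimisation), which strengthens rather than changes the argument.
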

	
	\begin{proof}
		A \DCOPs{} can be modelled as a \UDCOPs{} with all privacy costs equal $0$. The obtained 
		\UDCOPs{} would always reach an agreement, if possible. Therefore the goal of a \UDCOPs{} would also match with the goal of the modelled \DCOPs{}. 
		This implies a tougher class of complexity for \UDCOPs{}.
	\end{proof}
	
	The space complexity required by \ABTU{} and \SyncBTU{} in each agent is identical with 
	the one of \ABT{} and \SyncBT{}, since the only additional structures are 
	the privacy costs associated with its values, constituting a constant factor increases for domain storage 
	Similarly, additional structures with constant values are added from \DCOP to \UDCOP. 
	
	\subsection{Comparison with \MODCOPs} \label{ComparMODCOP}
	
	\paragraph{Multi-Objective Distributed Constraint Optimization Problem}
	A multi-objective optimization problem (\MOOP{})~\cite{maheswaran2004,delle2011} 
	is defined as the problem of simultaneously maximizing $k$ objective 
	functions that have no common measure, 
	defined over a set 
	of variables, each one taking its value in a given domain.
	Thus, a solution to  \MOOP{} is a set of assignments maximizing 
	the combination of the objective functions.
	Here, each objective function 
	can be defined over a subset
	of variables of the problem. 
	However, to simplify our discourse, we assume that each function is defined over the
	same set of variables. A Multi-Objective \DCOP~\cite{clement2013}, called
	\MODCOP{}, 
	is an extension of the standard mono-objective \DCOPs{}. 
	
	Note that a \MODCOP{} is a \DCOP{} where the weight of each constraint
	tuple is a vector of values $\vas{w_i}$, each value $w_i$ representing
	a different metric.  Two weights $\vas{w^1_i}$ and $\vas{w^2_i}$ for the
	same partial solution, inferred from disjoint sets of weighted
	constraints, are combined into a new vector $\vas{w^3_i}$ where each
	value is obtained by summing the values in the corresponding position
	two input vectors, namely $w^3_i=w^1_i+w^2_i$.  The quality of
	a solution of \MODCOP{} is a vector integrating the cost of all
	weighted constraints. The vectors can be compared using various
	criteria, such as leximin, maximin,
	social welfare or Theil index~\cite{netzer2011,matsui2015}.
	
	To clarify why Multi-Objective \DCOPs{} (\MODCOPs{}) cannot integrate our
	concept of privacy as one of the criteria they aggregate, we give an
	example of what would be achieved with \MODCOPs{}, as contrasted with the
	results using \DCOPs{}. 
	We show a comparative trace based on one of
	the potential techniques in \MODCOPs{}, providing a hint on why
	\MODCOPs cannot aggregate privacy lost during execution in the same
	way as \UDCOP. In this example, the privacy value of each assignment
	and its constraint cost are two elements of an ordered pair defining
	the weight of  \MODCOP. For illustration, in this example pairs of
	weights are compared lexicographically with the privacy having
	priority.

	\begin{example}			\label{ex:modcopdsa}
		We assume to model  Example~\ref{ex:dcop} with a
		\MODCOP{}. As also illustrated in the trace (Table~\ref{tab:modcop}) with lexicographical comparison (\UDCOP{} \DSAU{} vs. \MODCOP{} \DSA{}), privacy first.  Candidate values are marked with $^*$ if they are better than old values, and will be adopted. For these two approaches, we evaluate both \solutionCost{} and \privacyCost.
		
		At the beginning of  \DSA process, the
		participants select a random value. Resulting \agentview{} is $\{(x_1=1),(x_2=1),(x_3=3)\}$. For instance, \AgentOne evaluates this state with \privacyCost{} of $80$ and \solutionCost{} of $70$.   Recall that privacy criterion is prevailing; hence a notation $[80,70]$. \DSAU evaluates this same state as a summation of these costs, with a value of $150$. The participants then inform each others of their value.
		 They  consider a  change of their value to a new randomly selected one.   \agentview{} is $\{(x_1=2),(x_2=3),(x_3=1)\}$ for these two solvers. 
		
		With \UDCOP{} \DSAU{} (state$_1$), 	\AgentOne and \AgentTwo{}
		do not propose the new value as it
		would increase their cost, and \AgentThree{} chooses to change
		its variable's value from $3$ to $1$.
		However, with \MODCOP{} \DSA{}, \AgentTwo{} changes its value to $3$ (as it believed \privacyCost{} will drop from $100$ to $10$),
		which is not the case with \DSAU{}, which implies privacy
		loss. The \agentview{} is now $\{(x_1=2),(x_2=3),(x_3=3)\}$ (the real privacy loss is the summation of all revealed costs, \ie $100+10=110$, and not only $10$). 
		
		In short, with the \MODCOP{} model, \AgentTwo{} reveals more values and 
		loses more privacy (with a difference of $110-100=10$ for privacy costs) than with \UDCOPs{}.

	\end{example}

	\begin{table}
	\caption{Comparative trace of two rounds (\UDCOP{} \DSAU{} vs. \MODCOP{} \DSA{})}\label{tab:modcop}			
	\centering
	\begin{tabular}{p{0.2\textwidth}
			p{0.1\textwidth}p{0.1\textwidth}p{0.1\textwidth}
			p{0.00\textwidth}
			p{0.1\textwidth}p{0.1\textwidth}p{0.1\textwidth}
		} \\ \toprule		
		Framework & \multicolumn{3}{c}{\UDCOP{} \DSAU} && \multicolumn{3}{c}{\MODCOP{} \DSA} \\ \midrule
		Agent & $A_1$ & $A_2$ & $A_3$ && $A_1$ & $A_2$ & $A_3$ \\ \cmidrule{2-4} 
		\cmidrule{6-8} 
		\multicolumn{8}{c}{current state} \\ \midrule 
		state$_0$  & $1$ & $1$ & $3$ && $1$ & $1$ & $3$ \\
		\solutionCost  & $70$ & $120$ & $230$ && $70$ & $120$ & $230$ \\
		\privacyCost  & $80$ & $100$ & $10$ && $80$ & $100$ & $10$ \\ 
		evaluation  & $150$ & $220$ & $240$ && $[80,70]$ & $[100,120]$ & $[10,230]$ \\ 
		\multicolumn{8}{c}{believed next state} \\ \midrule 
		considered state & $2$ & $3$ & $1$ && $2$ & $3$ & $1$ \\
		\solutionCost  & $150$ & $155$ & $135$ && $230$ & $190$ & $40$ \\
		\privacyCost  & $100$ & $110$ & $90$ && $20$ & $10$ & $80$ \\ 
		evaluation  & $250$ & $265$ & $225$* && $[20,230]$* & $[10,190]$* & $[80,40]$ \\ 
		\multicolumn{8}{c}{achieved next state} \\ \midrule 
		state$_1$  & $1$ & $1$ & $1$ && $2$ & $3$ & $3$ \\
		\solutionCost  & $70$ & $120$ & $40$ && $230$ & $190$ & $230$ \\
		\privacyCost & $80$ & $100$ & $90$ && $100$ & $110$ & $10$ \\ 
		evaluation  & $150$ & $220$ & $130$ && $[100,230]$ & $[110,190]$ & $[10,230]$ \\ 
		\bottomrule
	\end{tabular}
\end{table}	
	
	The assumption that each agent owns a single variable is
	also not restrictive.  Multiple variables in an agent can be
	aggregated into a single variable by Cartesian product. 
	Nevertheless some algorithms can exploit these underlying structures
	for efficiency, and this has been the subject of extensive 
	research~\cite{fioretto2015,mandiau2014}.
	
	We now discuss how \UDisCP{} can be interpreted as a planning problem.
	
	\subsection{Comparison with \POMDPs} \label{ComparPOMDP}
	
	The problem that each agent in \UDisCP{} has to solve
	have similarities to a Partially Observable Markov
	Decision Problem (\POMDP{}). Given ways to approximate
	observation and transition conditional probabilities, these problems could
	be reduced to \POMDPs{}~\cite{monahan1982,kaelbling1998,Dibangoye2009}.
	
	A \POMDP agent regularly reasons in terms of belief (probability 
	distribution over the states), and tries to build a policy, namely a 
	recommendation of each action to be executed as function of current belief. 
	For \UDisCP{}, the corresponding \POMDP  is defined by the tuple 
	 $\langle S,A,T,R,\Omega,O,\gamma \rangle$ with components~\cite{savaux2016discsps}: 
	\begin{itemize}
		\item $\cal S$: Set of states of the agent, defined by possible 
		contents of its \agentview{}, of the nogoods stored by the agent, the knowledge the agent gathers about the secret elements of the \UDisCP{} unknown to it, and the 
		information  already revealed.
		\item $\cal A$: Set of actions available to an agent, consisting in local reasoning and communication actions		
	     that are a function of the
		selected protocols (\ie communication language).  For
		example, in \ABT{} these communications actions can have as payloads
		assignment announcements (\okm{} messages) and nogoods in (\nogoodm{} messages).
		\item $T$: Set of transition probabilities between states given actions
		for \UDisCP{}. It is estimated
		in our approach by training 
		$\risk{}$.  
		\item $\cal R$: Set of rewards of \POMDP{} is the same as $\Rew$ for the corresponding \UDisCP{}.
		\item $\Omega$: Set of possible observations is given by 
		$\Util$, the possible	
		 incoming payloads
		of the communication actions available in \UDisCP{},		
		as well as possible results of local reasoning steps.
		\item $O$: Set of conditional observation probabilities.  In reported experiments, it is assumed that the 
		message payloads truthfully reveal the corresponding elements of 
		the states of the other agents, while the probability of the 
		remaining elements have to be inferred by the agent.
		\item $\gamma$: Discount factor set to $1$, since we have not 
		taken into consideration the impact of time on utilities in this work.					
	\end{itemize}
	
	In the next section, we will present experiments that use our \UDisCP{} models and algorithms to preserve privacy.

	\section{Experimental Results}\label{Experiments}
		
	We describe the experimental protocol for our study (\ref{protocol}), and then we give the obtained results for DMS context (\ref{result}).
	
	\subsection{Experimental protocol and DMS context} \label{protocol}
	
	We evaluate our framework and algorithms on randomly generated
	instances of {\em distributed meeting scheduling problems (DMS)}~\cite{maheswaran2004,bessiere2007,gershman2008}. Existing studies has already addressed the question of
	privacy in DMS problems by considering the
	information on whether an agent can attend a meeting to be
	private~\cite{wallace2005,brito2008}.

	The algorithm we use to  generate the DMS problems is defined according 
	to the following procedure: 
	(i) Variables are created and associated with the agent controlling them. 
	(ii) Domains (possible values) are initialized for all variables. 
	(iii) The global constraint {\em ``all equals''} is added.
	(iv) The unary constraints (individual unavailabilities) are added.
	(v) Privacy costs uniformly distributed between $0$ and $9$ are generated for each possible variable value, variable assignment and constraint.
	
	The experiments are carried out on a computer under Windows $7$, using a $1$ core $\SI{2,16}{\giga\hertz}$ 
	CPU and $\SI{4}{\gibi\byte}$ of RAM. 
	Implementation is done in Java (jre $1.8$) using the JADE platform (version $4.3.3$) 
	to build the multi-agent system~\cite{bellifemine2005}.
	
	The problems are parametrized as follows: $10$, $20$, or $40$ agents, $10$ possible values, and the cost of a revelation is a random number between $0$ and $9$.
	These parameters are used to guarantee that the problems are not
	over-constrained or under-constrained, as the probability to find a solution increases 
	with the increase of domain size, and with the decrease of the number of agents and 
	unary constraint tightness. Density is defined as the proportion of 
	binary constraints
	in $\cons$ among $\vars_i \times \vars_j$ ($i \neq j$). Because of the constraint requiring all 
	couples of variables to be equals, density in DMS is always $1$. Tightness is defined as proportion of forbidden tuples among all constraints 
	(binary but also unary ones) of $\cons$. 
	
	\begin{proof}
		For DMS, we assume  $t$ for constraint tightness, $m$ for the number of agents $\agents$ and $d$ for the size of each domain set  $\dom_i$.
		Indeed, the probability that a given value is authorized is:
		
		\begin{equation}\notag 1-t \end{equation}
		
		The probability that a given value is authorized by all agents is:
		\begin{equation}\notag (1-t)^m \end{equation}
				
		The probability that a given value is forbidden by at least one agent is: 
		\begin{equation}\notag 1-(1-t)^m \end{equation}

		The problem has a probability $s$ to have at least one solution if and only if:
	
		\begin{equation}\notag 1-s = (1-(1-t)^m)^d \end{equation} 
		\begin{equation}\notag 1-\sqrt[d]{(1-s)}=(1-t)^m \end{equation} 
		\begin{equation}\notag 1-t=\sqrt[m]{(1-\sqrt[d]{(1-s)})} \end{equation}
		
	\end{proof}
	
	Figures~\ref{figure:parametersDMS}	
	corroborates this for a DMS modelled by 
	 \DisCSP with a probability of $50\%$ 
	to have at least one solution, as depicted according 
	to the previous formula and to experimental results. These figures are described by the number of agents (coordinate $x$), domain size (coordinate $y$) and tightness (coordinate $z$).

	\begin{figure}[h]
	\begin{center}
		\begin{tikzpicture}[scale=0.7]
		\begin{axis}[
		xlabel={$log_{2}(nbAgents)$},
		ylabel={$log_{2}(domainSize)$},
		zlabel={$tightness$}
		]
		\addplot3[surf,mesh/rows=7] coordinates {
			(1,1,46) (2,1,26) (3,1,14) (4,1,7) (5,1,4) (6,1,2) (7,1,1)  
			(1,2,60) (2,2,37) (3,2,21) (4,2,11) (5,2,6) (6,2,3) (7,2,1)
			(1,3,71) (2,3,47) (3,3,27) (4,3,14) (5,3,7) (6,3,4) (7,3,2)
			(1,4,79) (2,4,55) (3,4,33) (4,4,18) (5,4,9) (6,4,5) (7,4,2)
			(1,5,85) (2,5,62) (3,5,38) (4,5,21) (5,5,11) (6,5,6) (7,5,3)
			(1,6,90) (2,6,68) (3,6,43) (4,6,25) (5,6,13) (6,6,7) (7,6,3)	
			(1,7,93) (2,7,73) (3,7,48) (4,7,28) (5,7,15) (6,7,8) (7,7,4)	
		};
		\end{axis}
		\end{tikzpicture}	
		\qquad
		\begin{tikzpicture}[scale=0.7]
		\begin{axis}[
		xlabel={$log_{2}(nbAgents)$},
		ylabel={$log_{2}(domainSize)$},
		zlabel={$tightness$}
		]
		\addplot3[surf,mesh/rows=7] coordinates {
			(1,1,50) (2,1,0) (3,1,0) (4,1,0) (5,1,0) (6,1,0) (7,1,0)
			(1,2,50) (2,2,25) (3,2,0) (4,2,0) (5,2,0) (6,2,0) (7,2,0)
			(1,3,63) (2,3,38) (3,3,25) (4,3,13) (5,3,0) (6,3,0) (7,3,0)
			(1,4,75) (2,4,50) (3,4,31) (4,4,13) (5,4,6) (6,4,0) (7,4,0)
			(1,5,84) (2,5,59) (3,5,38) (4,5,19) (5,5,9) (6,5,3) (7,5,0)
			(1,6,89) (2,6,67) (3,6,42) (4,6,25) (5,6,13) (6,6,6) (7,6,3)
			(1,7,93) (2,7,73) (3,7,48) (4,7,28) (5,7,15) (6,7,8) (7,7,4)
		};
		\end{axis}
		
		\end{tikzpicture}
		\caption{Parametrization for a probability of $50\%$ to have a solution -- theoretical (left) and experimental (right) values} \label{figure:parametersDMS}
	\end{center}
\end{figure}
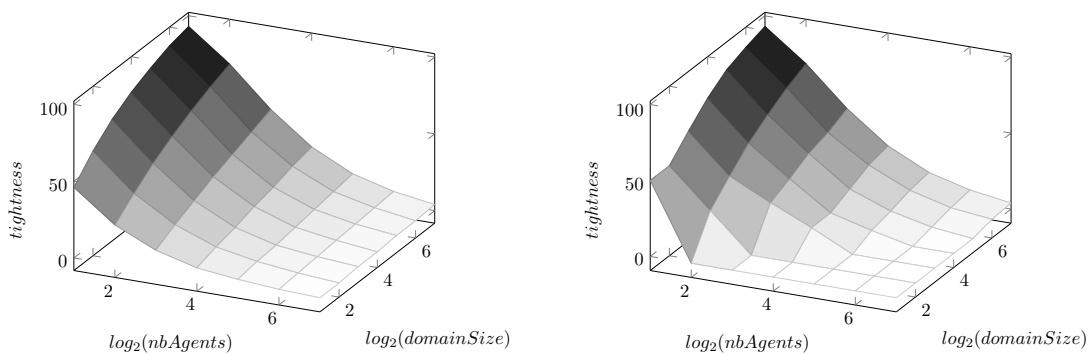
	
	For example, to have a probability of $50\%$ to have a solution, 
	we see that a DMS with $8$ agents ($x=3$), and 
	a domain size of $8$ ($y=3$), needs to be parametrized with a 
	tightness of $27\%$ ($z=27$).
	For a problem with $16$ agents  (\ie $x=4$) and a same domain 
	size ($y=3$) and a probability to have a solution, the tightness becomes $14\%$ 
	($z=14$). Note that in the experimental plot, values may slightly differ due 
	to the discrete nature of domains size.
	Later, this formula will be used to generate 
	problems with relevant parameters values and that are not under- or over-constrained. 
	We will verify that probability to find a solution to  DMS is negatively correlated 
	with the number of agents, constraints tightness, and positively correlated with 
	domains size.
		Each set of reported experiments is an average estimation of $50$ instances for 
		the different algorithms (\SyncBT{} -- \SyncBTU{}, \ABT{} -- \ABTU{},
		\ADOPT{} -- \ADOPTU{}, \DBO{} -- \DBOU{} and \DSA{} -- \DSAU{}).

	\subsection{Experiments on \UDisCP} \label{result}

	Tables~\ref{table:plotDisCSP} and~\ref{table:plotDCOP}
	show the average privacy loss per agent during the execution 
	of the solving algorithms for \DisCSPs (\SyncBT, \ABT), \DCOPs (\DBO, \DSA, and \ADOPT), 
	and of their respective  extensions, 
	with several values for the number of agents, 
	the domain size and the constraints tightness.
	Bold data points show instances with high privacy loss (from $10.0$ to $20.0$). Privacy loss of $20.0$ is the maximal value before interrupting solving.
	Empty data points show instances with low privacy loss (below $0.1$).
	Moreover, we refer to a data point in these tables as $solver$($nbAgents$, $domainSize$, $tightness$).
	For example, $\SyncBT$($10$,$20$,$30$) refers to the average privacy loss for instances 
	with $10$ agents, a domain size of $20$, and a constraint tightness of $30\%$
	solved with \SyncBT, namely $6.7$.

	\begin{table}[h]
	\centering
	\caption{Privacy Loss in \DisCSPs and \UDisCSPs	}
	\label{table:plotDisCSP}			
	\scalebox{0.7}	{
		\begin{tabular}{p{0.15\textwidth}
				p{0.0\textwidth}p{0.03\textwidth}p{0.03\textwidth}
				p{0.03\textwidth}p{0.03\textwidth}p{0.03\textwidth}
				p{0.0\textwidth}p{0.03\textwidth}p{0.03\textwidth}
				p{0.03\textwidth}p{0.03\textwidth}p{0.03\textwidth}
				p{0.0\textwidth}p{0.03\textwidth}p{0.03\textwidth}
				p{0.03\textwidth}p{0.03\textwidth}p{0.03\textwidth}	
				p{0.0\textwidth}p{0.03\textwidth}p{0.03\textwidth}
				p{0.03\textwidth}p{0.03\textwidth}p{0.03\textwidth}														
			} \\ \toprule	
			nbAgents & \multicolumn{24}{c}{$10$}   \\			
			\cmidrule{3-25}	
			domainSize & \multicolumn{6}{c}{$10$} & \multicolumn{6}{c}{$20$} & \multicolumn{6}{c}{$30$} & \multicolumn{6}{c}{$40$}\\
			\cmidrule{3-7} \cmidrule{9-13} \cmidrule{15-19} \cmidrule{21-25} 
			tightness(\%) &&
			$10$&$20$&$30$&$40$&$50$&&
			$10$&$20$&$30$&$40$&$50$&&
			$10$&$20$&$30$&$40$&$50$&&
			$10$&$20$&$30$&$40$&$50$  \\ \hline
			\SyncBT&&
			$0.7$ & $3.7$ & $3.8$ & $3.3$ & $3.5$ &     &
			$0.7$ & $4.9$ & $6.7$ & $6.0$ & $7.0$ &     &
			$0.7$ & $5.2$ & $8.8$ & $9.3$ & $\textbf{10.5}$ &     &
			$0.7$ & $5.4$ & $\textbf{10.3}$ & $\textbf{12.1}$ & $\textbf{14.0}$     \\
			\SyncBTU&&
			$0.7$ & $3.4$ & $2.8$ & $2.0$ & $0.6$ &     &
			$0.7$ & $4.5$ & $4.9$ & $3.7$ & $1.2$ &     &
			$0.7$ & $4.8$ & $6.5$ & $5.7$ & $1.8$ &     &
			$0.7$ & $4.9$ & $7.9$ & $7.3$ & $2.4$ \\
			\hdashline
			\ABT&&
			$3.1$ & $7.2$ & $\textbf{12.4}$ & $\textbf{13.9}$ & $1.6$ &     &
			$3.1$ & $9.5$ & $\textbf{20.0}$ & $\textbf{20.0}$ & $3.2$ &     &
			$3.1$ & $\textbf{10.3}$ & $\textbf{20.0}$ & $\textbf{20.0}$ & $4.8$ & &       
			$3.1$ & $\textbf{10.4}$ & $\textbf{20.0}$ & $\textbf{20.0}$ & $6.4$ \\
			\ABTU&&
			$2.5$ & $5.2$ & $6.1$ & $6.9$ & $1.5$ &     &
			$2.5$ & $6.9$ & $\textbf{10.7}$ & $\textbf{11.3}$ & $3.0$ &     &
			$2.5$ & $7.4$ & $\textbf{14.2}$ & $\textbf{19.5}$ & $4.5$ &     &
			$2.5$ & $7.5$ & $\textbf{16.6}$ & $\textbf{20.0}$ & $6.0$ \\
			\bottomrule	
			nbAgents & \multicolumn{24}{c}{$20$}   \\			
			\cmidrule{3-25}	
			domainSize & \multicolumn{6}{c}{$10$} & \multicolumn{6}{c}{$20$} & \multicolumn{6}{c}{$30$} & \multicolumn{6}{c}{$40$}\\
			\cmidrule{3-7} \cmidrule{9-13} \cmidrule{15-19} \cmidrule{21-25} 
			tightness(\%) &&
			$10$&$20$&$30$&$40$&$50$&&
			$10$&$20$&$30$&$40$&$50$&&
			$10$&$20$&$30$&$40$&$50$&&
			$10$&$20$&$30$&$40$&$50$  \\ \hline
			\SyncBT&&
			$0.5$ & $0.4$ & $0.1$ &     &     &     &
			$0.7$ & $1.1$ & $0.1$ &     &     &     &
			$0.7$ & $4.7$ & $0.1$ &     &     &     &
			$0.7$ & $1.9$ & $0.1$ &     &     \\
			\SyncBTU&&
			$0.5$ & $0.3$ & $0.1$ &     &     &     &
			$0.7$ & $1.0$ & $0.1$ &     &     &     &
			$0.7$ & $4.3$ & $0.1$ &     &     &     &
			$0.7$ & $1.8$ & $0.1$ &     &     \\
			\hdashline
			\ABT&&
			$2.3$ & $0.7$ & $0.1$ &     &     &     &
			$2.9$ & $2.1$ & $0.1$ &     &     &     &
			$3.0$ & $9.3$ & $0.1$ &     &     &     &
			$3.1$ & $3.8$ & $0.1$ &     &     \\
			\ABTU&&
			$1.8$ & $0.5$ & $0.1$ &     &     &     &
			$2.3$ & $1.5$ & $0.1$ &     &     &     &
			$2.4$ & $6.7$ & $0.1$ &     &     &     &
			$2.5$ & $2.8$ & $0.1$ &     &     \\
			\bottomrule	
			nbAgents & \multicolumn{24}{c}{$40$}   \\			
			\cmidrule{3-25}	
			domainSize & \multicolumn{6}{c}{$10$} & \multicolumn{6}{c}{$20$} & \multicolumn{6}{c}{$30$} & \multicolumn{6}{c}{$40$}\\
			\cmidrule{3-7} \cmidrule{9-13} \cmidrule{15-19} \cmidrule{21-25} 
			tightness(\%) &&
			$10$&$20$&$30$&$40$&$50$&&
			$10$&$20$&$30$&$40$&$50$&&
			$10$&$20$&$30$&$40$&$50$&&
			$10$&$20$&$30$&$40$&$50$  \\ \hline
			\SyncBT&&
			$0.1$ &     &   &   &   &   &
			$0.2$ &     &   &   &   &   &
			$0.3$ &     &   &   &   &   &
			$0.3$ & $0.1$ &   &   &  \\
			\SyncBTU&&
			$0.1$ &     &   &   &   &   &
			$0.2$ &     &   &   &   &   &
			$0.3$ &     &   &   &   &   &
			$0.3$ & $0.1$ &   &   &  \\
			\hdashline
			\ABT&&
			$0.4$ &     &   &   &   &   &
			$0.8$ &     &   &   &   &   &
			$0.9$ &     &   &   &   &   &
			$1.1$ & $0.1$ &   &   &  \\
			\ABTU&&
			$0.4$ &     &   &   &   &   &
			$0.7$ &     &   &   &   &   &
			$0.9$ &     &   &   &   &   &
			$1.1$ & $0.1$ &   &   &  \\ 
			\bottomrule	
	\end{tabular}}
\end{table}			

\begin{table}[h]
	\centering
	\caption{Privacy Loss in \DCOPs and \UDCOPs	}
	\label{table:plotDCOP}			
	\scalebox{0.7}	{
		\begin{tabular}{p{0.15\textwidth}
				p{0.0\textwidth}p{0.03\textwidth}p{0.03\textwidth}
				p{0.03\textwidth}p{0.03\textwidth}p{0.03\textwidth}
				p{0.0\textwidth}p{0.03\textwidth}p{0.03\textwidth}
				p{0.03\textwidth}p{0.03\textwidth}p{0.03\textwidth}
				p{0.0\textwidth}p{0.03\textwidth}p{0.03\textwidth}
				p{0.03\textwidth}p{0.03\textwidth}p{0.03\textwidth}	
				p{0.0\textwidth}p{0.03\textwidth}p{0.03\textwidth}
				p{0.03\textwidth}p{0.03\textwidth}p{0.03\textwidth}														
			} \\ \toprule	
			nbAgents & \multicolumn{24}{c}{10}   \\			
			\cmidrule{3-25}	
			domainSize & \multicolumn{6}{c}{10} & \multicolumn{6}{c}{20} & \multicolumn{6}{c}{30} & \multicolumn{6}{c}{40}\\
			\cmidrule{3-7} \cmidrule{9-13} \cmidrule{15-19} \cmidrule{21-25} 
			tightness(\%) &&
			$10$&$20$&$30$&$40$&$50$&&
			$10$&$20$&$30$&$40$&$50$&&
			$10$&$20$&$30$&$40$&$50$&&
			$10$&$20$&$30$&$40$&$50$  \\ \hline
			\DBO&&
			$0.3$ & $0.6$ & $1.1$ & $1.1$ & $1.6$ &   &
			$0.4$ & $0.8$ & $1.4$ & $1.4$ & $2.1$ &   &
			$0.6$ & $1.1$ & $2.0$ & $2.0$ & $2.9$ &   &
			$0.8$ & $1.5$ & $2.8$ & $2.8$ & $4.1$ \\
			\DBOU&&
			$0.2$ & $0.4$ & $0.4$ & $0.8$ & $1.0$ &   &
			$0.3$ & $0.5$ & $0.5$ & $1.0$ & $1.3$ &   &
			$0.4$ & $0.7$ & $0.7$ & $1.4$ & $1.8$ &   &
			$0.5$ & $1.0$ & $1.0$ & $2.0$ & $2.6$ \\
			\hdashline
			\DSA&&
			$1.4$ & $3.6$ & $5.0$ & $6.4$ & $8.4$ &   &
			$1.8$ & $4.7$ & $6.5$ & $8.3$ & $\textbf{10.9}$ &   &
			$2.9$ & $6.6$ & $9.1$ & $\textbf{11.7}$ & $\textbf{15.3}$ &   &
			$3.6$ & $9.2$ & $\textbf{12.8}$ & $\textbf{16.3}$ & $\textbf{20.0}$ \\
			\DSAU&&
			$1.4$ & $2.0$ & $2.2$ & $2.6$ & $3.0$ &   &
			$1.8$ & $2.6$ & $2.9$ & $3.4$ & $3.9$ &   &
			$2.6$ & $3.6$ & $4.0$ & $4.7$ & $5.5$ &   &
			$3.6$ & $5.1$ & $5.6$ & $6.6$ & $7.6$ \\
			\hdashline
			\ADOPT&&
			$3.6$ & $4.2$ & $5.6$ & $6.8$ & $8.8$ &   &
			$4.4$ & $5.5$ & $7.3$ & $8.8$ & $\textbf{11.4}$ &   &
			$6.2$ & $7.6$ & $\textbf{10.2}$ & $\textbf{12.4}$ & $\textbf{16.0}$ &   &
			$8.7$ & $1.7$ & $\textbf{14.2}$ & $\textbf{17.3}$ & $\textbf{20.0}$ \\
			\ADOPTU&&
			$2.4$ & $3.0$ & $3.4$ & $4.0$ & $4.2$ &   &
			$3.1$ & $3.9$ & $4.4$ & $5.2$ & $5.5$ &   &
			$4.4$ & $5.5$ & $6.2$ & $7.3$ & $7.6$ &   &
			$6.2$ & $7.6$ & $8.7$ & $\textbf{10.2}$ & $\textbf{10.7}$ \\
			\bottomrule	
			nbAgents & \multicolumn{24}{c}{$20$}   \\			
			\cmidrule{3-25}	
			domainSize & \multicolumn{6}{c}{$10$} & \multicolumn{6}{c}{20} & \multicolumn{6}{c}{$30$} & \multicolumn{6}{c}{$40$}\\
			\cmidrule{3-7} \cmidrule{9-13} \cmidrule{15-19} \cmidrule{21-25} 
			tightness(\%) &&
			$10$&$20$&$30$&$40$&$50$&&
			$10$&$20$&$30$&$40$&$50$&&
			$10$&$20$&$30$&$40$&$50$&&
			$10$&$20$&$30$&$40$&$50$  \\ \hline
			\DBO&&
			$0.2$ & $0.1$ & $0.1$ &   &   &   &
			$0.4$ & $0.7$ & $0.1$ &   &   &   &
			$0.5$ & $1.0$ & $0.1$ &   &   &   &
			$0.7$ & $1.4$ & $0.1$ &   &  \\ 
			\DBOU&&
			$0.2$ & $0.1$ & $0.1$ &   &   &   &
			$0.2$ & $0.1$ & $0.1$ &   &   &   &
			$0.3$ & $0.7$ & $0.1$ &   &   &   &
			$0.5$ & $0.4$ & $0.1$ &   &  \\ 
			\hdashline
			\DSA&&
			$1.0$ & $0.4$ & $0.1$ &   &   &   &
			$1.6$ & $1.0$ & $0.1$ &   &   &   &
			$2.5$ & $5.9$ & $0.1$ &   &   &   &
			$3.2$ & $3.3$ & $0.1$ &   &  \\ 
			\DSAU&&
			$1.0$ & $0.2$ & $0.1$ &   &   &   &
			$1.6$ & $0.6$ & $0.1$ &   &   &   &
			$2.5$ & $3.3$ & $0.1$ &   &   &   &
			$3.4$ & $1.8$ & $0.1$ &   &  \\ 
			\hdashline
			\ADOPT&&
			$2.5$ & $0.5$ & $0.1$ &   &   &   &
			$4.0$ & $1.2$ & $0.1$ &   &   &   &
			$6.0$ & $6.9$ & $0.1$ &   &   &   &
			$8.3$ & $0.6$ & $0.1$ &   &  \\ 
			\ADOPTU&&
			$1.8$ & $0.3$ & $0.1$ &   &   &   &
			$2.8$ & $0.2$ & $0.1$ &   &   &   &
			$4.2$ & $4.9$ & $0.1$ &   &   &   &
			$5.9$ & $2.8$ & $0.1$ &   &  \\ 
			\bottomrule	
			nbAgents & \multicolumn{24}{c}{$40$}   \\			
			\cmidrule{3-25}	
			domainSize & \multicolumn{6}{c}{$10$} & \multicolumn{6}{c}{$20$} & \multicolumn{6}{c}{$30$} & \multicolumn{6}{c}{$40$}\\
			\cmidrule{3-7} \cmidrule{9-13} \cmidrule{15-19} \cmidrule{21-25} 
			tightness(\%) &&
			$10$&$20$&$30$&$40$&$50$&&
			$10$&$20$&$30$&$40$&$50$&&
			$10$&$20$&$30$&$40$&$50$&&
			$10$&$20$&$30$&$40$&$50$  \\ \hline
			\DBO&&
			$0.1$ &     &   &   &   &   &
			$0.1$ &     &   &   &   &   &
			$0.6$ &     &   &   &   &   &
			$0.8$ & $0.1$ &   &   &  \\
			\DBOU&&
			$0.1$ &     &   &   &   &   &
			$0.1$ &     &   &   &   &   &
			$0.1$ &     &   &   &   &   &
			$0.2$ & $0.1$ &   &   &  \\
			\hdashline
			\DSA&&
			$0.3$ &     &   &   &   &   &
			$0.5$ &     &   &   &   &   &
			$1.0$ &     &   &   &   &   &
			$1.3$ & $0.1$ &   &   &  \\
			\DSAU&&
			$0.3$ &     &   &   &   &   &
			$0.4$ &     &   &   &   &   &
			$1.0$ &     &   &   &   &   &
			$1.3$ & $0.1$ &   &   &  \\
			\hdashline
			\ADOPT&&
			$0.8$ &     &   &   &   &   &
			$1.2$ &     &   &   &   &   &
			$2.5$ &     &   &   &   &   &
			$3.0$ & $0.1$ &   &   &  \\
			\ADOPTU&&
			$0.5$ &     &   &   &   &   &
			$0.8$ &     &   &   &   &   &
			$1.8$ &     &   &   &   &   &
			$2.2$ & $0.1$ &   &   &  \\ 
			\bottomrule	
	\end{tabular}}
\end{table}	
	
	\paragraph{}Generally, we see that for both Tables~\ref{table:plotDisCSP} and~\ref{table:plotDCOP},
	increasing the number of agents implies a reduction of the number of solutions. 
	Thus, problems being over-constrained, agents interrupt the solving faster, which 
	can explain the reduction of average privacy loss for problems with many agents. 
	Indeed, we see that most instances with $10$ agents have a significant average 
	privacy loss, while instances with $40$ agents have low privacy loss.
	For example, in Table~\ref{table:plotDisCSP}, for $solver$($10$, $10$, $50$), all 
	solvers imply a privacy loss ($3.5$, $0.6$, $1.6$, $1.5$) for \SyncBT{} -- \SyncBTU and \ABT{} -- \ABTU, respectively. However, for $solver$($40$, $10$, $50$) all the previous solvers have empty data points. Similarly for Table~\ref{table:plotDCOP}, results obtained for $40$ agents give also a lower privacy losses for the same reasons.
	
	\paragraph{} Moreover, we also see that for these two tables,
	high privacy loss is correlated with a high tightness and a high domain size. Table~\ref{table:plotDisCSP} shows that the number of  high privacy loss  depend on the tightness (more than $30\%$) and a domain size up to $30$. 	For example, for $ABTU$($10$,$10$,$50$), privacy loss is $1.5$, while it is 
		$6.0$ for $\ABTU$($10$,$40$,$50$).  
	Table~\ref{table:plotDCOP} shows that high privacy loss (bold data points) never occurs for instances with a domain size of $10$, while it does for most problems with a domain size of $40$. 	For example, for $\ADOPT$($10$,$10$,$50$), privacy loss is $8.8$, while  $\ADOPT$($10$,$40$,$50$) gives 	$20.0$. For these two tables, the main reason is that a higher domain size 
	allows more solution proposals and avoids premature interruptions.

 \paragraph{} Finally, recall  that the difference of values between different families of solvers is explained by the different types of privacy considered (\ie domain, assignment and constraint), as 
 the number of revelations of assignments differs from the number of revelations of constraints.   
	Table~\ref{table:plotDisCSP} measures domain privacy loss for agents
	during the execution of \SyncBT, \ABT, and their extensions.
	\SyncBT and \SyncBTU are better than \ABT and \ABTU at preserving privacy,
	likely due to the increase of exchanged messages  with asynchronous solvers, as agents 
	can run concurrently.	
	 Table~\ref{table:plotDCOP} measures assignment privacy with \ADOPT and \ADOPTU.  For $\ADOPT$($10$,$40$,$t$) with $t \in \{10,20, 30, 40, 50\}$, the  assignment privacy loss is $\{6.2,7.6, 10.2, 12.4, 16\}$. However with the same parameters for \ADOPTU, it is  $\{4.4,5.5, 6.2, 7.3, 7.6\}$. The loss is reduced with our extension as search is driven by objective to minimize the assignment privacy loss, and interrupt solving if  necessary. 
	Table~\ref{table:plotDCOP} measures constraint privacy loss with \DBO{} -- \DBOU and 
\DSA{} -- \DSAU. We see that \DBO and \DBOU are better than
\DSA and \DSAU, likely due to the fact that with \DBO, only one agent changes its value at each iteration, while with \DSA all agents do, which implies an increase of the number 
of exchanged messages.
	For example, in Table~\ref{table:plotDCOP} for $solver$($10$, $20$, $50$) 
	privacy loss drops from $2.1$ to $1.3$ and from $10.9$ to $3.9$  for 
	\DBO{} -- \DBOU and \DSA{} -- \DSAU respectively.	
For all these algorithms dealing with different types of privacy, an increase of  privacy loss with initial solvers implies a better preservation with extended ones.

	\section{Conclusion}\label{Conclusions}
	
	While many approaches have been proposed recently for dealing with privacy in distributed constraint satisfaction and optimization problems, 
	none of them is exempt from limitations. These approaches may require particular properties from the initial problem, or may consider certain aspects of privacy only. In this work, we propose the Utilitarian Distributed Constrained Problem  
	(\UDisCP{}) framework. 
	The framework models the privacy loss for the revelation of an 
	agent's constraints as a utility function, letting agents integrate privacy requirements directly in their search process. Solving the problem then consists
	in finding the best compromise between 
	solution quality and privacy loss, instead of focusing only on solution quality.
	We propose extensions to existing algorithms for \DisCSPs{} (\SyncBTU{}, \ABTU{}) and \DCOPs{} 
	(\DBOU{}, \DSAU{}, and \ADOPTU{}) 
	that let agents use information about 
	privacy to modify their behaviour and guide their search process, by proposing values that reduce the amount of privacy loss, and 
	compare them on different types of distributed meeting scheduling problems. 
	The comparison shows that explicit modelling  and reasoning with the utility of privacy allows for significant savings in privacy with minimal 
	impact on the quality of the achieved solutions. 
	
	Our approach has several possible extensions. For future works, we first plan to
	extend our models and algorithms to more general problems modelled with constraints, 
	namely problems with n-ary constraints (not only binary ones) as well as 
	multi-variable problems, where each agent may control several variables. 
	In this case, 
	the solution of an agent is a set of assignments, rather than a single assignment.
	Further, we want to investigate the notion of ethics~\cite{cointe2016}, implying that agents may have remorse for lying when modifying their behaviour for privacy. These notion of ethics leads to the building of communities of interests between agents~\cite{zardi2014,zardi2016}, where privacy costs will differ according to the recipient of a message, instead of being random. 
	A \DisCSP modelling was used for road traffic in~\cite{Doniec2008,DoniecMPE08},
	and we would like to apply our model of privacy for road traffic simulation where the agents/drivers may choose whether to reveal some private data concerning for example their driving, or their habits. Indeed, we think that our privacy model may also be adapted to other applications such as  multi-robot exploration~\cite{Monier2011} or smart energy~\cite{Rust2016}. 
	
	\section*{Acknowledgements}
	Authors gratefully acknowledge the reviewers for their constructive remarks, 
	allowing to improve our article. 
	This work was realized while the first author was a visiting researcher 
	at Florida Institute of Technology (Sept.~$2015$ -- May~$2016$). 

	\bibliographystyle{plain}	
	\bibliography{arxiv}

\end{document}